\newlength\titlebox \setlength\titlebox{2.25in}
\renewenvironment{abstract}{\centerline{\bf
Abstract}\vspace{0.5ex}\begin{quote}\small}{\par\end{quote}\vskip 1ex}
\setlist{leftmargin=5mm}
\newcommand{\concat}{\cdot}
\patchcmd{\@addmarginpar}{\ifodd\c@page}{\ifodd\c@page\@tempcnta\m@ne}{}{}
\gdef\urltilde{\lower 0.6ex\hbox{~}}
 \newcommand{\B}{\mathcal{B}}
\newcommand{\G}{\mathcal{G}} \renewcommand{\H}{\mathcal{H}}
 \newcommand{\R}{\mathcal{R}}
\renewcommand{\S}{\mathcal{S}} \newcommand{\T}{\mathcal{T}}
\newcommand{\last}{\mathname{last}}
\newcommand{\nop}[1]{{}}
\newcommand{\defterm}[1]{\ul{\emph{#1}}}
\def\defeq{\ensuremath{:=}}
\newcommand{\ra}{\rightarrow}
\newcommand{\lora}{\longrightarrow}
\newcommand{\goto}[1]{\stackrel{#1}{\lora}}
\newcommand{\set}[1]{\{#1\}}                      
\newcommand{\card}[1]{|{#1}|}  
\newcommand{\tup}[1]{\langle #1\rangle}            
\newcommand{\Nat}{{\rm I\kern-.23em N}}
\newcommand{\Init}{\mathit{Init}}
\newcommand{\citea}[1]{\citeauthor{#1}~\cite{#1}}
\newcommand{\myi}{\emph{(i)}\xspace}
\newcommand{\myii}{\emph{(ii)}\xspace}
\newcommand{\Spec}{K}
\newcommand{\Specc}{\Spec_{\tup{\S,\T}}}
\newcommand{\Speccaprox}{{\hat{\Spec}}_{\tup{\S,\T}}}
\newcommand{\TSpec}{\supremal{\Spec}}
\newcommand{\TSpecc}{\supremal{\Specc}}
\newcommand{\TSpeccaprox}{\supremal{\Speccaprox}}
\newcommand{\Act}{\Sigma}
\newcommand{\BAct}{A}
\newcommand{\Mkd}{L_m}
\newcommand{\Gtd}{L}
\renewcommand{\sup}{V}
\newcommand{\Closure}[1]{\overline{#1}}
\newcommand{\Nondet}{\textsf {\small Succ}}
\newcommand{\Services}{\ensuremath{\mathsf{Indx}}}
\newcommand{\supremalC}{\mathname{sup}\Contr}
\newcommand{\Contr}{\mathsf{C}}
\newcommand{\Plant}{\G_{\tup{\S,\T}}}
\newcommand{\PlantApprox}{\hat{\G}_{\tup{\S,\T}}}
\newcommand{\CGDES}{cg_{\textsc{DES}}}
\newcommand{\CG}{cg}
\newcommand{\preceqnd}{\preceq_{\text{ND}}}
\newcommand{\lowsub}[1]%
{{\scriptsize\raisebox{.4\height}{$#1$}}}%
\newcommand{\supremal}[1]{{#1}\lowsub{\uparrow}}
\newcommand{\Targ}{\T}
\newcommand{\MaxTarg}{\T^*}
\newcommand{\Be}[1]{\B_#1}
\newcommand{\Behaviors}{\Be{1},\ldots,\B_{n}}
\newcommand{\word}{\texttt{word}}
\newcommand{\Comp}{P}
\tikzstyle{every initial by arrow}=[initial text=]
\tikzstyle{every state}=[fill=none,draw=black,text=black]
\tikzstyle{every state}=[fill=none,draw=black,text=black,inner sep=0pt,minimum
\tikzstyle{every picture}=[->,>=stealth',shorten >=1pt,auto,node distance=2.5cm,
\tikzstyle{sim}=[->,dotted]
\newcommand{\labelfig}[1]{\textcolor{blue}{\sc #1}}
\newcommand{\mathname}[1]{\ensuremath{\text{\textit{#1}}}}
\newcommand{\actionfont}[1]{\text{\textsc{#1}}}
\newcommand{\amine}{\actionfont{GoMine}}
\newcommand{\aload}{\actionfont{load}}
\newcommand{\adepo}{\actionfont{GoDepot}}
\newcommand{\aunload}{\actionfont{unload}}
\newcommand{\arepair}{\actionfont{repair}}
\newcommand{\adrill}{\actionfont{dig}}
\newcommand{\stateproj}{\mathname{st}}
\renewcommand{\mapsto}{\ra}
\newtheorem{theorem}{Theorem}
\newtheorem{lemma}{Lemma}
\newtheorem{corollary}{Corollary}
\newtheorem{definition}{Definition}
\newtheorem{example}{Example}
\theoremstyle{nonumberplain}
\newtheorem{proof}{Proof}
\title{Supervisory Control for Behavior Composition}
\author{
        Paolo~Felli \\ The University of Melbourne, Australia \\ paolo.felli@unimelb.edu.au \and
        Nitin~Yadav and Sebastian Sardina \\ RMIT University, Australia \\ \{name.surname\}@rmit.edu.au
}
\date{}
\begin{document}

\twocolumn[
  \begin{@twocolumnfalse}
    \maketitle
 
\begin{abstract}
We relate behavior composition, a synthesis task studied in AI, to supervisory control theory from the discrete event systems field.
In particular, we show that realizing (i.e., implementing) a target behavior module (e.g., a house surveillance system) by suitably
coordinating a collection of available behaviors (e.g., automatic blinds, doors, lights, cameras, etc.) amounts to imposing a supervisor
onto a special discrete event system.
 Such a link allows us to leverage on the solid foundations and
 extensive work on discrete event systems, including borrowing tools and ideas from that field.
As evidence of that we show how simple it is to introduce preferences in the mapped framework.
\end{abstract}

\vspace{.1in}
   \end{@twocolumnfalse}
]

\section{Introduction} 

\newcommand{\GRAIL}{\textsf{\small GRAIL}\xspace}
\newcommand{\TCT}{\textsf{\small TCT}\xspace}
\newcommand{\STCT}{\textsf{\small STCT}\xspace}
\newcommand{\DESUMA}{\textsf{\small DESUMA}\xspace}
\newcommand{\SUPREMICA}{\textsf{\small SUPREMICA}\xspace}

In this paper, we formally relate two automatic synthesis tasks, namely, \emph{behavior composition}, as studied within the AI community
(e.g., ~\cite{BerardiCDGLM:IJCIS05,DeGiacomoS:IJCAI07,StroederPagnucco:IJCAI09,DeGiacomoPatriziSardina:AIJ13}) and \emph{supervisory control} in discrete event systems~\cite{WonhamRamadge:SIAMJCO87,Ramadge:SIAM87,RamadgeWonhamDES:IEEE89,Cassandras:BOOK06-DES}.
By doing that, we aim at facilitating the awareness and cross-fertilization between the two different communities and techniques available.

The composition problem involves automatically ``realizing'' (i.e., implementing) a desired, though virtual, \emph{target} behavior module by suitably coordinating the execution of a set of concrete \emph{available} behavior modules.
From an AI perspective, a behavior refers to the abstract operational model of a device or program, generally represented as a nondeterministic transition system.
For instance, one may be interested in implementing a house entertainment system by making use of various devices installed, such as game/music consoles, TVs, lights, etc.

Supervisory Control, on the other hand, is the task of automatically synthesizing ``supervisors'' that \emph{restrict} the behavior of a ``plant'', i.e. a discrete event system (DES) which is assumed to spontaneously generate events, such that a given specification is fulfilled.
DES models a wide spectrum of physical systems, including manufacturing, traffic, logistics, and database systems.
In Supervisory Control Theory (SCT), an automaton $\G$---known as \emph{``the plant''}---is used to model both controllable and uncontrollable behaviors of a given DES. The assumption is that the overall behavior of $\G$ is \emph{not} satisfactory and must be controlled. To that end, a so-called  \emph{supervisor} $\sup$ is imposed on $\G$ so as to meet a given specification on event orderings and legality of states. 
Supervisors observe (some of) the events executed by $\G$ and can \emph{disable} those that are controllable in order to guarantee a given specification.

Both behavior composition and supervisory control can be seen as \emph{generalized} forms of automated planning tasks~\cite{Srivastava:GENPLAN-TR08}.
Rather than building (linear) plans to bring about an (achievement) goal, the aim is to keep the system in certain ``good'' states. Since we are to build controllers meant to run continuously, solutions generally include ``loops.'' Moreover, in contrast with classical planning, the domains are nondeterministic in nature, which relates  to FOND planning and strong-cyclic notions of plans, as shown in~\cite{RamirezYadavSardina:ICAPS13}.

To build a bridge between the two problems and communities, this article provides the following  technical contributions:
\begin{itemize}
  \item A formal, provably correct (Theorems~\ref{theo:soundness} and \ref{theo:completness}) reduction of the AI behavior composition problem to the problem of controlling a regular language on a particular DES plant (which we call ``composition plant''). 
  
  \item A technique to extract the controller generator---the universal solution for a composition problem---from a supervisor of the DES composition plant. We show the technique is correct (Theorem~\ref{theo:extract_cg_correct}), optimal w.r.t. computational complexity (in the worst case; Theorem~\ref{theo:extract_cg_complexity}), and realizable using existing off-the-shelf SCT tools.

  \item An approach to DES-based behavior composition \emph{approximation} for the special case of deterministic system (as it is the case, for example, in web-service composition). This is appealing when no composition solution exists and one hence looks  for ``the best'' possible controller.
\end{itemize}

The motivations behind linking behavior composition to supervisory control theory are threefold.
First, supervisory control theory has rigorous \emph{foundations rooted in formal
languages}. It was first developed by~\cite{RamadgeWonhamDES:IEEE89} and others in the 80's and then further strengthened by many other researchers w.r.t. both theory and application~(see, e.g., \cite{Cassandras:BOOK06-DES} for a broad overview of the field).
Recasting the composition task as the supervision of a DES provides us with a solid foundation for studying composition.
Second, computational properties for supervisor synthesis have been substantially studied and tools for supervisor synthesis are available, including \TCT/\STCT~\cite{ZhangWonham:SSCDES01}, \GRAIL~\cite{Reiser:WODES06-GRAIL}, \DESUMA~\cite{Ricker:WODES06-DESUMA}, and \SUPREMICA~\cite{Akesson:2003-SUPERMICA}. Thus, we can apply very different techniques for solving the composition problem than those already available within the AI literature (e.g., PDL satisfiability~\cite{DeGiacomoS:IJCAI07}, direct search~\cite{StroederPagnucco:IJCAI09}, LTL/ATL synthesis~\cite{LustigVardi:FOSSACS09,DeGiacomoFelli:AAMAS10}, and computation of special kind of simulation relations~\cite{DeGiacomoPatriziSardina:AIJ13,BerardiCDGP:IJFCS07}).
Finally, once linked, we expect cross-fertilization between AI-style composition and DES supervisory theory.
To that end, for instance, we demonstrate here how DES-based composition can directly and naturally handle constraints over a composition task.
Indeed, one may look at importing powerful notions, and corresponding techniques, common in SCT, such as hierarchical and decentralized supervision, maximal controllability, and tolerance supervision.

\section{Preliminaries} 
\label{sec:preliminaries}

In this section, we very briefly review the required background to understand the rest of the paper.

\subsection{The Behavior Composition Problem} \label{sec:preliminaries_bc}

The behavior composition problem has been recently much studied in the web-services and AI literature~\cite{BerardiCDGHM:VLDB05,BerardiCDGP:IJFCS07,DeGiacomoPatriziSardina:AIJ13,StroederPagnucco:IJCAI09,LustigVardi:FOSSACS09}.  
The problem amounts to synthesising a \emph{controller} that is able to ``realize'' (i.e., implement) a desired, but nonexistent, \emph{target behavior} module, by suitably coordinating a set of \emph{available behavior} modules.
In what follows, we mostly follow the model detailed
in~\cite{DeGiacomoPatriziSardina:AIJ13}.\footnote{For legibility, and without loss of generality, we
  leave out the so-called shared environment, used to model action's preconditions. 
}

Generally speaking, behaviors represent the operational logic of a device or program, and they are modeled using, possibly nondeterministic, finite transition systems.
Nondeterminism is used to express the fact that one may have have incomplete information about a behavior's logic.

\begin{example}\label{ex:mining}
Consider the example depicted in Figure~\ref{fig:ambientIntel-system}.
Target $\T$ encapsulates the desired functionality of a mining
system, which allows to unboundedly extract minerals from the ground ($\adrill$), move (some transportation vehicle) to the extraction area ($\amine$) and stock the loose
materials to a certain deposit (action sequence $\aload$--$\adepo$--$\aunload$). Finally, routine repairs are performed ($\arepair$). 
 At every step, the user requests an action compatible with this
 specification, and a (good) controller should guarantee that it can
 fulfill such request by delegating the action to one of the three
machines actually available in the mine: a dumper truck
 (initially at the depot), a loader, and an old excavator (which are
 initially at the mine).
Note that such machines may be nondeterministic: the truck can break
down while trying to reach the mining area due to terrain
conditions, whereas the loader might need to perform repairs after
unloading. The excavator (which can not move from the
extraction area) is instead deterministic; however, it is mainly
intended for digging, not for loading. As a result, whenever it
performs a $\aload$ action, it needs to be repaired before being able
to load again.
\end{example}

Technically, a \defterm{behavior} is a tuple $\B=
\tup{B,\BAct,b_0,\delta}$ where: 
\begin{itemize}\itemsep=0pt
\item $B$ is the finite set of states; 
\item $\BAct$ is the set of actions; 
\item $b_0 \in B$ is the initial state; and
\item $\delta \subseteq B\times \BAct \times B$ is $\B$'s
  (nondeterministic) transition relation:
  $\tup{b,a,b'}\in\delta$ denotes that action $a$ executed
  in behavior state $b$ may lead the behavior to successor state $b'$.
\end{itemize}

We also use alternative notations for the transition relation, by freely exchanging the notations $\tup{b,a,b'}\in\delta$, $b \goto{a} b'$ in $\B$, and $b'\in\delta(b,a )$.
If, for any state $b$ and action $a$, there exists a unique successor state $b'\in\delta(b,a)$, then we say that $\B$ is \emph{deterministic}, in the sense that its successor state is uniquely determined by the current state and the chosen action, thus we write $b'=\delta(b,a)$.
A \defterm{trace} of $\B$ is the possibly infinite sequence $\tau=b_0\goto{a_1}b_1\goto{a_2}\cdots$ such that $b_{i+1} \in \delta(b_\ell,a_{\ell+1})$, for $\ell\geq 0$. A \defterm{history} is a finite trace.

An (available) \defterm{system} is a tuple $\S = \tup{\B_1,\ldots,\B_n}$, where each $\B_i =\tup{B_i,\BAct,b_{0i},\delta_i}$ is referred to as an \emph{available behavior} in $\S$ (over shared actions $\BAct$). 
The joint asynchronous execution of  $\S$ is captured by the so-called \defterm{enacted system}
$\B_\S = \tup{B_\S,\BAct,\vec{b}_0,\delta_\S }$,\footnote{The term
  ``enacted'' is due to the fact that, in the full composition
  setting, all behaviors are meant to be run within a shared
  environment (which, without loss of generality, we left out in this work for simplicity).} where:
\begin{itemize}
\item $B_\S = B_1 \times \ldots \times B_n$ is the set of system states of $\B_\S$ (given $\vec{b} = \tup{b_1,\ldots,b_n} \in B_\S$, we denote $\stateproj_i(\vec{b}) = b_i$ for all $i \in \set{1,\ldots,n}$); 
\item $\vec{b}_0 = \tup{b_{01},\ldots,b_{0n}} \in B_\S$ is the initial state of $\B_\S$; and
\item $\tup{\vec{b},a,j,\vec{b}'} \in \delta_\S$ \emph{iff} 
$\stateproj_j (\vec{b}) \goto{a} \stateproj_j (\vec{b'})$ in $\B_j$ and 
$\stateproj_i(b) = \stateproj_i(\vec{b'})$,  for all $i \in \set{1,\ldots,n}\setminus \set{j}$.
\end{itemize}
A \defterm{system history} is a straightforward generalization of behavior histories to an available system $\B_\S$, that is, a sequence 
of the form $h =
\vec{b_0}\goto{a_1,j_1}\vec{b_1}\goto{a_2,j_2} \cdots \goto{a_\ell,j_\ell} \vec{b_\ell}$.
We denote with $\last(h)$  the last state $\vec{b}_\ell$ of $h$ and with $\H$ the set of all system histories.
Finally, the \defterm{target behaviour} module is just a deterministic
behavior $\Targ = \tup{T,\BAct_t,t_0,\delta_t}$. For clarity, we denote
its states by $t$ instead of $b$.
Hence, a \defterm{target trace} is a, possibly infinite, sequence $\tau = t_0 \goto{a_1} t_1 \goto{a_2} \cdots$, such that $t_i \goto{a_{i+1}} t_{i+1}$ in $\T$, for all $i \geq 0$.

\begin{figure}[!t]
\begin{center}
\resizebox{\columnwidth}{!}{\begin{tikzpicture}[->,thin,node distance=1.8cm,double distance=2pt]
\tikzstyle{every state}=[circle,fill=none,draw=black,text=black,inner
sep=1pt,minimum size=4mm,font=\small]

\tikzset{ActionStyle/.style = {font=\small}}

\begin{scope}[shift={(0cm,0cm)}]

\node[initial left,state]	(t0)                    {$t_0$};
\node[state]	(t1)   [below left of=t0,shift={(-0.3cm,0.6cm)}]              {$t_1$};
\node[state]	(t2)   [below of=t1,shift={(0cm,0.5cm)}]              {$t_2$};
\node[state]	(t3)   [right of=t2,shift={(.3cm,0cm)}]              {$t_3$};
\node[state]	(t4)   [above of=t3,yshift=-.2in]              {$t_4$};

\path
(t0) edge[loop right]  node[above,ActionStyle] {$\adrill$}    (t0)	
 (t0) edge[]  node[swap,ActionStyle] {$\amine$}    (t1)	
(t1) edge[]  node[sloped,below,ActionStyle] {$\aload$}    (t2)	
(t2) edge[]  node[ActionStyle,below] {$\adepo$}    (t3)	
(t3) edge[]  node[ActionStyle] {$\aunload$}    (t4)	
(t4) edge[]  node[ActionStyle] {$\arepair$}    (t0)	
;

\node (name)[above of=t0,shift={(-1.1cm,-1.2cm)}]	
	{\labelfig{Target $\T$}};
\end{scope}

\begin{scope}[shift={(1.5cm,0cm)}]
\node[state]                (a3) []      {$a_3$};
\node[initial right,state]	(a0)    [right of=a3]               {$a_0$};
\node[state]    			(a1) [right of=a0,xshift=-.2cm] {$a_1$};
\node[state]                (a2) [below left of=a1,xshift=.3in,yshift=.1in]      {$a_2$};

\path	
(a0) edge[bend left]  node[ActionStyle] {$\amine$}    (a1)
(a1) edge[bend left]  node[pos=.5,ActionStyle] {$\adepo$}    (a2)
(a2) edge[bend left]  node[very near start,ActionStyle] {$\aunload$} (a0)
(a0) edge[]  node[swap,ActionStyle] {$\amine$}   (a3)
(a3) edge[bend right]  node[swap,ActionStyle] {$\arepair$}   (a0)
;

\node (name)[right of=a1,shift={(-.9cm,.5cm)}] {\labelfig{Truck $\B_{1}$}};
\end{scope}

\begin{scope}[shift={(3cm,-2.4cm)}]

\node[state]    			(b1) [] 		{$b_1$};
\node[initial below,state]	(b0)  [right of=b1]   {$b_0$};
\node[state]                (b2) [below of=b1,yshift=.1in]      {$b_2$};
\node[state]                (b3) [right of=b2]      {$b_3$};

\path
(b0) edge[loop right]  node[above,ActionStyle] {$\aload$}    (b0)	
(b0) edge[bend right]  node[swap,ActionStyle] {$\adepo$}    (b1)
(b1) edge[]  node[swap,ActionStyle] {$\aunload$}    (b0)
(b1) edge[bend right]  node[ActionStyle] {$\aunload$}    (b2)
(b2) edge[]  node[ActionStyle] {$\arepair$}    (b3)
(b3) edge[bend right]  node[sloped, below,ActionStyle] {$\amine$}    (b0)
;

\node (name)[below of=b3,shift={(1.2cm,1.5cm)}]	
	{\labelfig{Loader $\B_{2}$}};
\end{scope}

\begin{scope}[shift={(-1.2cm,-3cm)},node distance=2cm,auto]
\node[initial left,state]	(c0)                    {$c_0$};
\node[state]    			(c1) [right of=c0,shift={(0cm,-.3cm)}] 		{$c_1$};

\path	
(c0) edge[loop below]  node[ActionStyle] {$\adrill$}    (c0)
(c1) edge[loop right]  node[ActionStyle] {$\adrill$}    (c1)
 (c0) edge[bend left]  node[ActionStyle,pos=.8] {$\aload$}    (c1)
(c1) edge[bend left]  node[pos=0.4,ActionStyle] {$\arepair$}    (c0)
;

\node (name)[below of=c0,shift={(1.4cm,.7cm)}]	
	{\labelfig{Excavator $\B_{3}$}};
\end{scope}

\end{tikzpicture}}
\end{center}
\caption{A mining system with three available machines.}
\label{fig:ambientIntel-system}
\end{figure} 

A so-called \defterm{controller} is a function of the form 
\[
\Comp : \H \times \BAct_t \ra \set{ 1,\ldots,n}
\]
that takes a history (i.e., a run) of the system and the next action request, and outputs the index of the available behavior where the action is to be  delegated. 
The composition task then amounts to whether there exists (and if so, how to compute it) a controller $P$ such that the target behavior is ``realized,'' that is, it looks as if the target module is being executed.

Roughly speaking, a controller realizes a target module if it is always able to further extend all the system traces (by prescribing adequate action delegations), no matter how the available behaviors happen to evolve (after each step).
To capture this, one first define the set $\H_{P,\tau}$ of all
$(P,\tau)$-induced system histories, that is, those system histories
(i.e., histories of enacted system $\B_\S$) with action requests as
per target trace $\tau$ and action delegations as per controller $P$.%
\footnote{The set of $(P,\tau)$-induced system histories $\H_{P,\tau}$ is defined as $\H_{P,\tau} = \bigcup_k \H_{P,\tau}^k$, where $\H_{\Comp,\tau}^0 = \set{\vec{b}_0}$ (all
behaviors are in their initial state) and $\H_{\Comp,\tau}^{k+1}$ is the set of all possible histories of length $k$+1 obtained by applying $\Comp$ to a history in $\H_{\Comp,\tau}^{k}$. See~\cite{DeGiacomoPatriziSardina:AIJ13} for details.}

\begin{definition}[\cite{DeGiacomoS:IJCAI07,DeGiacomoPatriziSardina:AIJ13}]
Controller $\Comp$ \defterm{realizes a target trace} $\tau$, as above,  in a
system $\S$ if for all $(P,\tau)$-induced system histories $h\in
\H_{P,\tau}$ with $\card{h} < \card{\tau}$, there exists an enacted
system (successor) state $\vec{b}_{\card{h}+1}$ such that
$last(h)\goto{a_{\card{h}+1,j_h}}\vec{b}_{\card{h}+1}$ is in $\B_\S$
with $j_h=\Comp(h,a_{\card{h}+1})$.
A controller $\Comp$ \defterm{realizes a target behavior} $\Targ$ (in a system $\S$) iff it realizes all the traces of $\Targ$.
\end{definition}
 
The existence requirement of a system successor state $\vec{b}_{\card{h}+1}$ implies that  delegation $\Comp(h,a_{\card{h}+1})$ (of action request $a_{\card{h}+1}$ in system history $h$) is legal (i.e., is able to extend the current history $h$). 
Whenever a controller realizes a target behavior $\Targ$ in a system $\S$, we say that such controller is an \defterm{exact compositions} of $\Targ$ in $\S$. 
It is not difficult to see that there is indeed an exact composition for the example in Figure~\ref{fig:ambientIntel-system}: all actions requested as per the target logic will \emph{always} be fulfilled (i.e., delegated to an available behavior) by the controller, forever.

As one may expect, checking the existence of an exact composition is EXPTIME-complete~\cite{ 
DeGiacomoS:IJCAI07}, as it resembles conditional planning under full observability~\cite{Rintanen:ICAPS04}.
Interestingly, by revisiting a certain stream of work in service composition area, the
technique devised in \citea{DeGiacomoPatriziSardina:AIJ13} allows to
synthesize a sort of meta-controller, called \defterm{controller generator} (CG) representing \emph{all} possible compositions. 
Concretely, a CG is a function 
\[
	\CG : B_\S \times \BAct_t \ra 2^{\set{1,\ldots,n}}
\]
that, given a system state and a target action $a$, returns a \emph{set} of behavior indexes to which the requested action $a$ may be legally delegated.
A controller generator $\CG$ \emph{generates} a concrete controller $\Comp$ iff $\Comp(h,a) \in \CG(last(h),a)$ for any system history $h$ and action $a$ compatible with $P$ and the target logic, respectively. 
The CG is unique and finite, and represents a flexible and robust
solution concept to the composition problem~\cite{DeGiacomoPatriziSardina:AIJ13}.\footnote{Note that there is a potentially  uncountable set of composition controllers. To see this, consider any subset $E \subseteq \Nat$ of natural numbers and define controller $C_E$ to delegate to behavior $\B_1$ if the length $n$ of the current history is in $E$ and to $\B_2$ otherwise. There is clearly one controller for each subset of $\Nat$, and thus there is an uncountable number of controllers.}

We close by noting that \citea{DeGiacomoPatriziSardina:AIJ13}'s technique is directly based on the idea that a composition amounts to a module that coordinates the concurrent execution of the available behaviours so as to ``mimic'' the desired target behaviour. 
This ``mimicking'' is captured through the formal notion of
simulation~\cite{Miln71}, suitably adapted to deal with
nondeterministic behaviors. 
Intuitively, a
behavior $\B_1$ ``simulates'' another behavior $\B_2$, denoted
$\B_2 \preceq \B_1$, if $\B_1$  is able to always \textit{match} all of
$\B_2$'s moves. 

Formally, given two behaviors $\B_1=
\tup{B_1,\BAct,b_{01},\delta_1}$ and $\B_2=
\tup{B_2,\BAct,b_{02},\delta_2}$, a simulation relation of
$\B_1$ by $\B_2$ is a relation $R \subseteq B_1 \times B_2$ such that
$\tup{b_1,b_2}\in R$ implies that for any action $a\in A$ and
transition $b_1\goto{a}b_1'$ in $\B_1$, there exists a
transition $b_2\goto{a}b_2'$ in $\B_2$.

Importantly, \citea{DeGiacomoPatriziSardina:AIJ13} defined a
so-called (greatest) \defterm{\textsc{nd}-simulation} relation (\textsc{nd} stands for
nondeterministic) between (the states of) the target behavior $\Targ$
and (the states of) the enacted system $\B_\S$, denoted $\preceqnd$.

\begin{definition}
Consider a target $\Targ = \tup{T,\BAct_t,t_0,\delta_t}$ and the enacted system $\B_\S = \tup{B_\S,\BAct,\vec{b}_0,\delta_\S }$. An
\textsc{nd}-simulation relation of $\Targ$ by $\B_\S$ is a relation $R \subseteq
T \times B_\S$ such that $\tup{t,\vec{b}}\in R$ implies that for all
actions $a\in A$ there exists an index $j\in\set{1,\ldots,n}$ such
that for all transitions $t\goto{a}t'$ in $\Targ$ \myi there exists a transition $\vec{b}\goto{a,j}\vec{b}'$ in $\B_\S$, and \myii for all $\vec{b}\goto{a,j}\vec{b}'$ in $\B_\S$ we have $\tup{t',\vec{b}'}\in R$.
\end{definition}

The following result holds.

\begin{theorem}[\cite{BerardiCDGP:IJFCS07,DeGiacomoPatriziSardina:AIJ13}]
\label{th:sim_initial_states}
There exists a composition controller of a target behavior $\T$ in an available
system $\S$ if and only if $t_0 \preceqnd \vec{b_0}$ (where $t_0$ and $\vec{b_0}$ are $\T$'s and $\B_\S$'s initial states).
\end{theorem}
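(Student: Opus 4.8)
The plan is to prove both directions by exploiting the tight correspondence between the local, state-based condition defining an \textsc{nd}-simulation and the global, history-based condition defining realizability, using the determinism of $\T$ as the glue that lets a single target state be attached to each induced history. Throughout I rely on the fact that a system history $h$ records its action labels $a_1,\ldots,a_{\card{h}}$, so that feeding this sequence into the deterministic target from $t_0$ determines a unique target state, which I write $t_{\card{h}}$.

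For the ``if'' direction ($t_0 \preceqnd \vec{b}_0 \Rightarrow$ a controller exists), I would explicitly construct a controller $\Comp$ from $\preceqnd$ and then verify it realizes $\T$. On input $(h,a)$ I set $\Comp(h,a)$ to be some index $j$ (say the smallest) witnessing the \textsc{nd}-simulation clause at the pair $\tup{t_{\card{h}},\last(h)}$ for action $a$; determinism of $\T$ makes this well defined. The heart of the argument is an induction on the length of induced histories establishing the invariant that, for every target trace $\tau$ and every $h\in\H_{\Comp,\tau}$, the pair $\tup{t_{\card{h}},\last(h)}$ lies in $\preceqnd$. The base case is the hypothesis $t_0\preceqnd\vec{b}_0$. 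For the inductive step, since $\tup{t_{\card{h}},\last(h)}\in\preceqnd$, clause (i) guarantees that the chosen $j=\Comp(h,a_{\card{h}+1})$ admits a transition $\last(h)\goto{a_{\card{h}+1},j}\vec{b}'$, so the delegation is legal and extends $h$; clause (ii) guarantees that every such successor $\vec{b}'$ satisfies $\tup{t_{\card{h}+1},\vec{b}'}\in\preceqnd$, re-establishing the invariant for the extended history. Legality at every step of every trace is exactly realizability of $\T$.

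For the ``only if'' direction (a controller $\Comp$ exists $\Rightarrow t_0\preceqnd\vec{b}_0$), I would build a candidate relation from $\Comp$ and show it is an \textsc{nd}-simulation, so that maximality places it inside $\preceqnd$. Concretely, put $\tup{t,\vec{b}}\in R$ whenever there is a target trace $\tau$ and an induced history $h\in\H_{\Comp,\tau}$ with $\last(h)=\vec{b}$ and $t$ the target state reached after $\card{h}$ steps; the empty history gives $\tup{t_0,\vec{b}_0}\in R$. To check $R$ is an \textsc{nd}-simulation, take $\tup{t,\vec{b}}\in R$ witnessed by $(h,\tau)$ and an action $a$: when $\T$ has no $a$-transition from $t$ the clause is vacuous, and otherwise let $t\goto{a}t'$. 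Since $\Comp$ realizes the trace following $h$'s action sequence and then requesting $a$, the realizability condition applied to $h$ yields a legal delegation $j=\Comp(h,a)$ with $\last(h)\goto{a,j}\vec{b}'$ (clause (i)); and every successor of this delegation extends $h$ to a new induced history witnessing $\tup{t',\vec{b}'}\in R$ (clause (ii)). Hence $R\subseteq\preceqnd$ and so $t_0\preceqnd\vec{b}_0$.

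The main obstacle, and the place requiring the most care, is the correct treatment of nondeterminism together with the mismatch between a history-based controller and a state-based simulation. The realizability definition quantifies over \emph{all} induced histories, i.e., over all ways the available behaviors may have resolved their nondeterminism, whereas an \textsc{nd}-simulation is a relation on states; clause (ii)---``for \emph{all} $\vec{b}\goto{a,j}\vec{b}'$''---is precisely what bridges the two, ensuring the invariant survives no matter which successor the system reaches. Stating the inductive invariant correctly (attaching via determinism of $\T$ the unique target state to each induced history, and closing under \emph{all} nondeterministic successors rather than some) is the crux; the remaining bookkeeping about $\H_{\Comp,\tau}$ is routine.
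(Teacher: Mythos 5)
The paper does not prove this statement: it is quoted as a known result from the cited works, so there is no in-paper proof to compare against. Your argument is correct and is essentially the standard proof given in those references: from $t_0 \preceqnd \vec{b}_0$ you build a controller that always delegates to a witnessing index and show by induction that every induced history keeps the pair (current target state, $\last(h)$) inside the \textsc{nd}-simulation, clause \my{i} giving legality and clause \my{ii} closing the invariant under \emph{all} nondeterministic successors; conversely, the relation collecting all pairs (target state after $\card{h}$ steps, $\last(h)$) over induced histories is itself an \textsc{nd}-simulation, hence contained in the greatest one. The one step needing care---replacing the witnessing trace by one that follows $h$'s action sequence and then requests $a$, which is legitimate because induced histories of length at most $\card{h}$ depend only on the first $\card{h}$ requested actions---is handled correctly in your write-up.
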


In this paper we are indeed interested in synthesising controller generators, and not just single composition controllers.
However, instead of building an \textsc{nd}-simulation relation, we aim at extracting the controller generator by leveraging on existing techniques in Supervisory Control Theory.

\subsection{Supervisory Control in Discrete Event Systems} \label{sec:preliminaries_des}

Discrete event systems range across a wide variety of physical systems that arise in technology (e.g., manufacturing and logistic systems, DBMSs, communication protocols and networks, etc.), whose processes are discrete (in time and state space), event-driven, and nondeterministic~\cite{Cassandras:BOOK06-DES}.
Generally speaking, Supervisory Control Theory is concerned with the \emph{controllability} of the sequences (or strings/words) of events that such processes/systems---commonly referred as the \emph{plant}---may generate~\cite{RamadgeWonhamDES:IEEE89}.
As standard in formal languages, a \defterm{language} $L$ over a set $\Act$ is any set $L \subseteq \Act^*$, and $\epsilon \in \Act^*$ denotes the empty string. 
The \defterm{prefix-closure} of a language $L$, denoted by $\Closure{L}$, is the language of all prefixes of words in $L$, that is, $w \in \Closure{L}$ if and only if $w\concat w' \in L$, for some $w' \in \Sigma^*$ ($w \concat w'$ denotes the concatenation of words $w$ and $w'$).
A language $L$ is  \defterm{closed} if $L = \Closure{L}$.

In SCT, the plant is viewed as a generator of the language of string of events characterizing its processes. 
Formally, a \defterm{generator} is a deterministic finite-state machine $\G=\tup{\Act,G,g_0,\gamma,G_m}$, where $\Act$ is the finite
alphabet of events; 
$G$ is a finite set of states;
$g_0\in G$ is the initial state;
$\gamma : G \times \Act \to G$ is the transition function; and
$G_m\subseteq G$ is the set of marked states.
We generalize transition function $\gamma$ to words as
follows: $\gamma :  G\times \Act^*\ra G$ is such that  $\gamma(g,\epsilon) = g$ and $\gamma(g,w\concat \sigma) = \gamma(\gamma(g,w),\sigma)$, with $w\in \Act^*$ and $\sigma\in\Act$.
We say that a state $g\in G$ is \defterm{reachable} if $g = \gamma(g_0,w)$ for some word $w\in \Act^*$.
Finally, given two words $w_1$,$w_2 \in \Act^*$, $w_1 > w_2$ iff $w_1 = w_2 \concat w$, for some  $w\neq\epsilon$.

The \defterm{language generated} by generator $\G$ is $\Gtd(\G) = \set{ w\in \Act^* \mid \gamma(g_0,w) \text{ is defined} }$, whereas the \defterm{marked} language of $\G$ is $\Mkd(\G) = \set{ w \in \Gtd(\G) \mid \gamma(g_0,w) \in G_m }$. Words in the former language stand for, possibly partial, operations or tasks, while words in the marked language represent the completion of some operations or tasks.
Note that $\Gtd(\G)$ is always closed, but $\Mkd(\G)$ may not be.

\newcommand{\gOn}{\mathname{on}}
\newcommand{\gOperate}{\mathname{operate}}
\newcommand{\gBreak}{\mathname{break}}
\newcommand{\gDismantle}{\mathname{dismantle}}
\newcommand{\gRepair}{\mathname{repair}}
\newcommand{\gOff}{\mathname{off}}
\newcommand{\gReassemble}{\mathname{reassemble}}

Central to generators is the distinction between those events that are controllable and those that they are not.
Technically, the generator's alphabet is partitioned into \defterm{controllable} ($\Act_c$) and \defterm{uncontrollable} ($\Act_u$) events, that is, $\Act = \Act_c \cup \Act_u$, where $\Act_c\cap\Act_u = \emptyset$.
All events may occur only when \emph{enabled}. Whereas controllable events may be enabled or disabled, uncontrollable events are assumed to be always enabled.

\begin{example}\label{ex:generator}
Figure \ref{fig:generator} shows a generator $\G$ modeling a generic industrial machine. 
The machine can be started (event $\gOn$), then repeatedly operated (event $\gOperate$), finally stopped (event $\gOff$). All these events are \emph{controllable},  in that their occurrence is in the hand of the machine's user. 
While machine is in state $1$, the machine may unexpectedly break down, signaled by the occurrence of event $\gBreak$. The occurrence of such event is however outside the control of the machine's user---the event is \emph{uncontrollable}.
When the machine breaks down, it ought to be either repaired or dismantled (events $\gRepair$ and $\gDismantle$, resp.), both within the control of the user.

As a generator, $\G$ produces words---those in $L(\G)$---representing the possible runs (i.e., executions) of the machine being modelled. 
In particular, the machine's marked language $\Mkd(\G) $ is equivalent to the regular expression $(\gOn \concat ( \gOperate \mid ( \gBreak \concat \gRepair ))^* \concat \gOff )^* $ that 
corresponds to those sequences of events that leave the machine in state $0$.
Words in $\Mkd(\G)$ are said to be ``marked,'' in that they are judged ``complete,'' and therefore ``good'' (in the eye of the machine's designer).
\end{example}

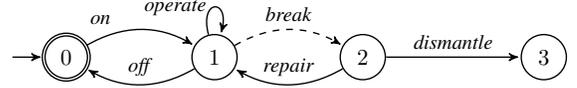
\begin{figure}[t]
\begin{center}
\resizebox{.9\columnwidth}{!}{\begin{tikzpicture}
\tikzset{ActionStyle/.style = {font=\small}}

\begin{scope}[shift={(0cm,0cm)},node distance=2.3cm,auto]
\node[double,state,initial]	(0)                   {$0$};
\node[state]    	(1) [right of=0]   {$1$};
\node[state]    	(2) [right of=1]   {$2$};
\node[state]    	(3) [right of=2,xshift=.2in]   {$3$};

\path	
(0) edge[pos=.3,bend left]  node[ActionStyle] {$\gOn$}  (1)
(1) edge[dashed,bend left]  node[ActionStyle] {$\gBreak$}  (2)
(1) edge[loop above]  node[left,ActionStyle] {$\gOperate$}  (1)
(2) edge[bend left]  node[above,ActionStyle] {$\gRepair$}  (1)
(1) edge[bend left]  node[above,ActionStyle] {$\gOff$}  (0)
(2) edge[]  node[ActionStyle] {$\gDismantle$}  (3)
;

\end{scope}

\end{tikzpicture}}
\caption{A generator modeling a simple machine.}
\label{fig:generator}
\end{center}
\end{figure}

As expected, the overarching idea in SCT is to check whether one is able to guarantee certain specified (good) behavior of the device being modeled by a generator, and if so, how.
A \defterm{specification} for a generator plant  $\G$ is a language $\Spec\subseteq\Gtd(\G)$. 
We are now prepared to formally introduce the key notion of controllability in SCT.

\begin{definition}\label{def:controllable}
A specification $\Spec$ is \defterm{controllable} in generator-plant
$\G$ if and only if $\Closure{\Spec} \cdot \Act_u  \cap  \Gtd(\G)  \subseteq \Closure{\Spec}$.
\end{definition}

That is, every prefix of $\Spec$ immediately followed by a legal uncontrollable event (i.e., one compatible with $\G$) can be extended to a word in the specification itself. 
Intuitively, $\Spec$ is controllable if it is not possible to be
``pushed" outside of it, regardless of potential uncontrollable events.

\begin{example}\label{ex:generator2}
Consider the specification $\Spec_1 = L((\gOn \concat \gOperate^* \concat
\gOff)^*)$ requiring that the machine from Example~\ref{ex:generator} will always function without break downs. 
Clearly, such specification is \emph{not} controllable: there exists an uncontrollable event ($\gBreak$) that can violate it.
In fact, any word $w'=w\concat \gOn$ in $\Gtd(\G)$ (therefore any such $w' \in \Closure{\Spec_1}$) can be extended with the uncontrollable event $\gBreak \in \Sigma_u$, resulting in word $w' \concat \gBreak$ not meeting the specification, that is, $w' \concat \gBreak \not\in \Closure{\Spec_1}$.

Consider alternative specification $\Spec_2\! = \!L((\gOn\! \mid\! \gOff \mid \gOperate \mid \gBreak \mid \gRepair)^*)$, which 
prohibits that the machine be dismantled. Such specification is indeed controllable: every time the machine breaks down, one needs to repair it.
Note how a very concrete process specification is embedded in $\Spec_1$, whereas $\Spec_2$ is more abstract, in that it does not describe a process but rather compactly captures a set of (good) processes.
\end{example}

The next step is to define what it means for a generator to be ``supervised'' in order to achieve certain behavior. The idea is that one---the supervisor---can disable certain controllable events to achieve a desired behavior.
Technically, a \defterm{supervisor} for a plant $\G$ is a function of the form 
\[
\sup : \Gtd(\G) \mapsto \set{\Act_{e} \mid  \Act_{e} \in 2^{\Act},\; \Act_u \subseteq \Act_{e} }
\]
that outputs, for each word in $\Gtd(\G)$, the set of events that are
enabled (i.e., allowed) next. 
Notice that uncontrollable events are always enabled.
A plant $\G$ under supervisor $V$ yields the \defterm{controlled system} $\sup/\G$ whose generated and marked languages are defined as follows:
\begin{align*}
\Gtd(\sup/\G) & = 
	\set{w\concat\sigma \in \Gtd(\G) \mid\ w \in \Gtd(\sup/\G),\; \sigma \in \sup(w)} \cup \set{\epsilon}; \\[0.75ex]
\Mkd(\sup/\G) & = \Gtd(\sup/\G)\cap \Mkd(\G).
\end{align*}

Informally, $\Gtd(\sup/\G)$ represents all processes that plant $\G$
may yield while supervised by $\sup$, whereas $\Mkd(\sup/\G)$ stands
for the subset 
that are, in some sense, ``complete''.

A key result in SCT states that being able to control a (closed) specification in a plant amounts to finding a supervisor for such specification.

\begin{theorem}[\cite{Wonham:TR12-SCT}]\label{theo:controllability_theorem}
 Let $\G$ be a generator and  $\Spec \subseteq  \Gtd(\G)$ be a closed and non-empty specification.
There exists a supervisor $\sup$ such that $\Gtd(\sup/\G)=\Spec$
\emph{iff} $\Spec$ is controllable in $\G$.
\end{theorem}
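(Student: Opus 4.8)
The plan is to prove the two directions separately, with the heavy lifting concentrated in the ``if'' (sufficiency) direction. Throughout I would exploit that $\Spec$ is closed, so $\Closure{\Spec} = \Spec$ and the controllability condition of Definition~\ref{def:controllable} reads simply $\Spec \concat \Act_u \cap \Gtd(\G) \subseteq \Spec$; I would also observe that non-emptiness together with closedness forces $\epsilon \in \Spec$, which anchors the base case of the induction below.

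For the necessity direction (``only if''), suppose a supervisor $\sup$ with $\Gtd(\sup/\G) = \Spec$ exists. To verify controllability I would take any $w \in \Spec$ and any $\sigma \in \Act_u$ with $w\concat\sigma \in \Gtd(\G)$, and show $w\concat\sigma \in \Spec$. Since $w \in \Spec = \Gtd(\sup/\G)$ and, by the very definition of a supervisor, $\Act_u \subseteq \sup(w)$, we have $\sigma \in \sup(w)$; the recursive definition of $\Gtd(\sup/\G)$ then immediately places $w\concat\sigma$ into $\Gtd(\sup/\G) = \Spec$. This direction is essentially a one-line unfolding of the definitions and uses only that uncontrollable events are always enabled.

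For the sufficiency direction I would exhibit an explicit canonical supervisor. Define, for every $w \in \Gtd(\G)$, $\sup(w) = \set{\sigma \in \Act \mid w\concat\sigma \in \Spec} \cup \Act_u$; this is a legal supervisor precisely because it always contains $\Act_u$. I would then prove $\Gtd(\sup/\G) = \Spec$ by induction on word length. The base case is $\epsilon$, which lies in both languages. In the inductive step, for a word $w\concat\sigma$ the forward inclusion splits into two cases according to whether $\sigma$ was enabled because $w\concat\sigma \in \Spec$ (trivial) or merely because $\sigma \in \Act_u$; in the latter case controllability is exactly what guarantees $w\concat\sigma \in \Spec$. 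The reverse inclusion uses closedness of $\Spec$ to pull the prefix $w$ back into $\Spec$ (hence into $\Gtd(\sup/\G)$ by the inductive hypothesis) and $\Spec \subseteq \Gtd(\G)$ to ensure $w\concat\sigma \in \Gtd(\G)$.

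The main obstacle is the forward inclusion in the sufficiency step: the constructed supervisor is deliberately forced to enable every uncontrollable event, so a priori it might generate strings escaping $\Spec$. Controllability is precisely the hypothesis that rules this out, and the proof should be organized so that this is the single place where the hypothesis is invoked. The remaining work---checking that $\sup$ is well defined, the base case, and the straightforward reverse inclusion---is routine bookkeeping with the recursive definition of $\Gtd(\sup/\G)$.
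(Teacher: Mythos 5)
Your proof is correct. Note that the paper does not actually prove this statement---it is quoted as a known result and attributed to the SCT literature (\cite{Wonham:TR12-SCT})---so there is no internal proof to compare against; your argument (necessity by unfolding the definition of $\Gtd(\sup/\G)$ together with $\Act_u \subseteq \sup(w)$, sufficiency via the canonical supervisor $\sup(w) = \set{\sigma \mid w\concat\sigma \in \Spec} \cup \Act_u$ and induction on word length, with closedness giving $\epsilon \in \Spec$ and the prefix step, and controllability invoked exactly once to keep uncontrollable continuations inside $\Spec$) is precisely the standard Ramadge--Wonham argument for the controllability theorem.
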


In many settings, one would further aim to control the language representing \emph{complete} processes, that is, the marked fragment of the plant.
In such cases, one shall focus on supervisors that can always drive the plant's execution towards the generation of words in the marked (supervised) language. 
Technically, supervisor $V$ is \defterm{nonblocking} in plant $\G$ if $\Gtd(\sup/\G) = \Closure{\Mkd(\sup/\G)}$.
This means that the strings in the supervised language $\Gtd(\sup/\G)$ are prefixes of marked supervised language $\Closure{\Mkd(\sup/\G)}$, and therefore they can always be potentially extended into a complete marked string.

\begin{example}\label{ex:generator3}
Consider a specification $\Spec_3$ for the generator in Figure~\ref{fig:generator} stating that one should $\gDismantle$ the machine when it breaks after exactly $n$ number of repairs. 
Concretely, $w \in K_3$ iff either $w$ does not mention $\gDismantle$ and mentions $\gRepair$ less than $n$ times, or $w = w' \concat \gBreak \concat\gDismantle$ and $w'$ mentions $\gRepair$ exactly $n$ times but does not mention $\gDismantle$. 

Specification $K_3 \cap L(\G)$ is controllable, as there exists a supervisor $\sup$ that disables
event $\gDismantle$ in any run/word containing less than $n$ break-down events, and then enables it while disabling event $\gRepair$.
In particular, $\Gtd(\sup/\G)=\Spec_3 \cap L(\G)$ (see some words in $K_3$ may never arise in the plant $\G$).

Notice, however, that such supervisor $\sup$ is \emph{not} nonblocking: any string ending with the event $\gDismantle$ can not be extended to a marked string. If, instead, state $3$ were
marked (or, say, the machine featured a controllable event $\gReassemble$ from state $3$ to state $2$), then the same supervisor would be nonblocking, with $ \Closure{\Mkd(\sup/\G)}=\Spec_3 \cap L(\G)$.
\end{example}

Now, when a specification $K$ is \emph{not} (guaranteed to be) controllable, one then looks for controlling the ``largest" (in terms of set inclusion) possible sublanguage of $K$. 
Interestingly, such sublanguage, called the \defterm{supremal controllable sublanguage} of $K$ and denoted $\supremalC(\Spec)$, does exist and is in fact unique~\cite{WonhamRamadge:SIAMJCO87}.

Putting it all together, in SCT, we are generally interested in (controlling) the $K$'s sublanguage $\TSpec = \supremalC(\Spec\cap \Mkd(\G))$, that is, the supremal marked specification.
It turns out that, under a plausible assumption, a supervisor does exist for non-empty \nolinebreak$\TSpec$.

\begin{theorem}[\cite{Wonham:TR12-SCT}]\label{theo:supNSC}
If $\Closure{\Spec} \cap \Mkd(\G) \subseteq \Spec$ and $\TSpec \not= \emptyset$, there exists a nonblocking supervisor $\sup$ for $\G$ s.t. $\Mkd(\sup/\G) = \TSpec$.
\end{theorem}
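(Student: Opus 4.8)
The plan is to reduce the claim to the controllability theorem (Theorem~\ref{theo:controllability_theorem}) applied to the \emph{prefix closure} $\Closure{\TSpec}$. First I would check that $\Closure{\TSpec}$ meets the three hypotheses of that theorem. It is closed by construction, and non-empty since $\TSpec\neq\emptyset$ forces $\epsilon\in\Closure{\TSpec}$; moreover $\Closure{\TSpec}\subseteq\Gtd(\G)$ because $\TSpec\subseteq\Spec\cap\Mkd(\G)\subseteq\Mkd(\G)\subseteq\Gtd(\G)$ and $\Gtd(\G)$ is closed. For controllability, I would use that $\TSpec=\supremalC(\Spec\cap\Mkd(\G))$ is, being a supremal controllable sublanguage, itself controllable; since the controllability condition $\Closure{\Spec}\cdot\Act_u\cap\Gtd(\G)\subseteq\Closure{\Spec}$ depends on $\Spec$ only through $\Closure{\Spec}$, and $\Closure{\Closure{\TSpec}}=\Closure{\TSpec}$, the language $\Closure{\TSpec}$ is controllable iff $\TSpec$ is. Theorem~\ref{theo:controllability_theorem} then delivers a supervisor $\sup$ with $\Gtd(\sup/\G)=\Closure{\TSpec}$.

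With such a $\sup$ in hand, I would compute the marked language directly from the definitions: $\Mkd(\sup/\G)=\Gtd(\sup/\G)\cap\Mkd(\G)=\Closure{\TSpec}\cap\Mkd(\G)$. The inclusion $\TSpec\subseteq\Closure{\TSpec}\cap\Mkd(\G)$ is immediate since $\TSpec\subseteq\Mkd(\G)$. The content of the theorem is the reverse inclusion, i.e.\ that $\TSpec$ is \emph{relatively (marked-)closed}, $\Closure{\TSpec}\cap\Mkd(\G)\subseteq\TSpec$. Granting this, $\Mkd(\sup/\G)=\TSpec$, and the nonblocking property follows at once: $\Closure{\Mkd(\sup/\G)}=\Closure{\TSpec}=\Gtd(\sup/\G)$.

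It remains to establish relative closedness of $\TSpec$, which I would do in two steps. Writing $E:=\Spec\cap\Mkd(\G)$, I first show $E$ is relatively closed using the hypothesis $\Closure{\Spec}\cap\Mkd(\G)\subseteq\Spec$: since $\Closure{E}\subseteq\Closure{\Spec}$, we get $\Closure{E}\cap\Mkd(\G)\subseteq\Closure{\Spec}\cap\Mkd(\G)\subseteq\Spec$, and as this intersection also lies in $\Mkd(\G)$ it is contained in $\Spec\cap\Mkd(\G)=E$. Next I transfer this to $\TSpec=\supremalC(E)$ via a supremality argument. Put $\TSpec':=\Closure{\TSpec}\cap\Mkd(\G)$. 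Because $\TSpec\subseteq\TSpec'\subseteq\Closure{\TSpec}$, monotonicity and idempotence of prefix-closure give $\Closure{\TSpec'}=\Closure{\TSpec}$, so $\TSpec'$ is controllable (again controllability sees only the closure). Also $\TSpec\subseteq E$ implies $\TSpec'\subseteq\Closure{E}\cap\Mkd(\G)=E$, so $\TSpec'$ is a controllable sublanguage of $E$; by supremality $\TSpec'\subseteq\supremalC(E)=\TSpec$. Hence $\Closure{\TSpec}\cap\Mkd(\G)=\TSpec'=\TSpec$, as required.

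The main obstacle is precisely this last relative-closedness step. The subtle point is that while the hypothesis readily yields relative closedness of $E$, one must propagate it to the \emph{supremal controllable sublanguage} $\TSpec$; the lever that makes the propagation work is that controllability is a property of the prefix-closure alone, which is what certifies $\TSpec'$ as an admissible competitor in the supremality comparison. Everything else (verifying the three hypotheses of Theorem~\ref{theo:controllability_theorem}, unfolding $\Mkd(\sup/\G)$, and deriving nonblockingness) is routine once this lemma is in place.
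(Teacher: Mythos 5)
Your proposal is correct. Note that the paper does not prove this statement at all---it is imported verbatim, with citation, from Wonham's notes as a known result of SCT---so there is no in-paper argument to compare against; your reconstruction is essentially the standard proof of the nonblocking supervisory control theorem: apply Theorem~\ref{theo:controllability_theorem} to $\Closure{\TSpec}$ (which is legitimate because controllability depends only on the prefix closure and $\TSpec$ is controllable by supremality), and then close the gap $\Mkd(\sup/\G)=\Closure{\TSpec}\cap\Mkd(\G)=\TSpec$ by showing that the marked-prefix-closedness hypothesis on $\Spec$ transfers first to $\Spec\cap\Mkd(\G)$ and then, via the supremality comparison with $\Closure{\TSpec}\cap\Mkd(\G)$, to $\TSpec$ itself, which also yields nonblockingness immediately.
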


The assumption that $\Closure{\Spec} \cap \Mkd(\G) \subseteq \Spec$  states that initial specification $K$ is \emph{closed under marked-prefixes}: every prefix from $K$ representing a complete process is part of $K$.
Theorem~\ref{theo:supNSC} will play a key role in our results.

\begin{example}\label{ex:generator4}
Consider again $\Spec_1 = L((\gOn \concat \gOperate^* \concat \gOff)^*)$ from Example \ref{ex:generator2}.  
Its supremal marked specification is $\supremal{\Spec_1} = \set{\epsilon}$, that is, the sole empty string. 
This is because as soon as the event $on$ is enabled and the machine moves to $1$, the uncontrollable event $\gBreak$ may occur, thus violating $\Spec_1$. Therefore, any word leading to state $1$ can not be in $\supremal{\Spec_1}$. 
\end{example}

\section{DES-based Behavior Composition}\label{sec:DEScomposition}
\newcommand{\req}{\mathsf{req}}
\newcommand{\idx}{\mathsf{idx}}

In this section we show how to relate the notion of a composition controller in behavior composition to that of an adequate supervisor in discrete event system. 
After all, their operational requirements are similar, namely, to take decisions in a step-by-step fashion in order to keep the system evolutions in a restricted set of ``good'' traces. 
Their differences can be summarized as follows: 
\setlength{\tabcolsep}{6pt}
\begin{center}
\begin{tabular}[b]{ p{.46\columnwidth} | p{.46\columnwidth} } 
\textsf{Composition Controller} & \textsf{Supervisor} \\ 
\hline	
given a system history $h$ and a target action $a$, it
outputs one delegation $\Comp(h,a)$
& 
given a plant's string prefix $w$, it outputs enabled events $\sup(w)$ \\
\hline
such that it is possible to proceed forever
&
such that we can always `reach' marked states\\
\end{tabular}
\end{center}

Hence, the idea is to mimic $j=\Comp(h,a)$ by means of $j\in \sup(h\concat a)$, with $a\in \sup(h)$. 
However, there are fundamental differences between the two formalisms that do not allow for a direct, straightforward, translation.
As a matter of fact, a naive translation that defines the plant as the cross-product of all available behaviors and the target's language as specification will simply not work, due to several mismatches between DES and behavior composition (see 
Section~\ref{sec:conclusions} for details). In particular, it is well known that, for nondeterministic systems (as is the case with the available system), the notion of language inclusion is weaker than that of simulation.

From now on, let $\S=\tup{\B_1,\ldots,\B_n}$ be an available system,
where $\B_i=\tup{B_i,\BAct,b_{0i},\delta_i}$ for $i \in
\set{1,\ldots,n}$, and $\Targ = \tup{T,\BAct_t,t_0,\delta_t}$ a target
behavior (without loss of generality we assume $\T$ to be connected and all $B_i$'s and $\T$ to be mutually disjoint sets).
The general approach is to build an adequate plant from $\S$ and $\T$,
and define a specification language $K$, such that controlling $K$ (as
per Definition~\ref{def:controllable}) amounts to composing 
$\T$ in $\S$.

So, let us next build a generator $\Plant$---the \emph{plant} to be controlled---from target $\T$ and system $\S$.
The controllable aspect of the plant amounts to behavior delegations: at any point in time, a supervisor can enable or disable an available behavior to execute.
On the other hand, the supervisor can control neither the action requests nor the evolution of the behavior selected---they are uncontrollable events.
A state in the plant encodes a snapshot of the whole composition process, namely, the state of all behaviors (including the target) together with the current pending target request and current behavior delegation.
Only those with no pending request or delegation are considered ``marked.'' 
Below, we use two auxiliary sets $\Services=\set{1,\ldots,n}$ and $\Nondet = \bigcup_{i \in \set{1,\ldots,n}} B_i$. \label{def:succ}

\begin{definition}
Let the \defterm{composition plant} $\Plant = \tup{\Act,G,g_0,\gamma,G_m}$ be defined as follows:
\begin{itemize}
\item $\Act = \Act_c \cup \Act_u$, where $\Act_c = \Services$ and  $\Act_u=  \BAct_t \cup \Nondet$, is the finite set of controllable (behaviors' indexes) and uncontrollable events (target's actions and behaviors' states).

\item $G = T \times B_1 \times \ldots \times B_n \times (\BAct_t \cup \set{e}) \times (\Services\cup\set{0})$ is the finite set of states of the plant. Additional symbol $e$ denotes no active request, whereas index $0$ denotes no active delegation.

\item $g_0 = \tup{t_0, b_{01},\ldots,b_{0n},e,0}$ is the initial state of the plant, encoding the initial configuration of the system and target, and the fact that there has been no request event or delegation.

\item $\gamma : G \times \Act \ra G$ of $\Plant$ is the plant's transition function where  
$\gamma(\tup{t,b_1,\ldots,b_n,\req,\idx},\sigma)$ is equal to:
\begin{itemize}
  \item $\tup{\delta_t(t,\sigma),b_1,\ldots,b_n,\sigma,0}$ if $\req = e$, $\idx = 0$,  $\sigma \in \BAct_t $; 
  
  \item $\tup{t,b_1,\ldots,b_n,\req,\sigma}$ if $\req \in\BAct_t$, $\idx =0$, $\sigma \in \Services$;

\item $\tup{t,b_1,\ldots,b_{\idx}',\ldots,b_n,e,0}$ if $\sigma \!\! \in\!\! \delta_{\idx}(b_{\idx},\req)$, $b_{\idx}'\!=\!\!\sigma$.

\end{itemize} 
\item $G_m = T \times B_1 \times \ldots \times B_n \times \set{e}\times \set{0}$ (marked states).
\end{itemize}
\end{definition}

By inspecting the plant transition function $\gamma$ we can see that the whole process for one target request involves three transitions in the plant, namely, target action request, behavior delegation, and lastly available system evolution.
Initially, and after each target request has been fulfilled, the plant is in a state with no active request ($e$) and no behavior delegation ($0$), ready to accept and process a new target request---a \emph{marked} state. Then:
\begin{enumerate}
  \item  given a legal target request (uncontrollable event) $\sigma \in \BAct_t$, the plant evolves to a state recording the request and the corresponding target evolution (case 1 of $\gamma$);
  
  \item after that, the plant may evolve relative due to (controllable) delegation events (one per available behavior), to states recording such delegations as well as the current pending action (case 2 of $\gamma$); and finally
  
  \item the plant may evolve, in an uncontrollable manner, to states reflecting all possible evolutions of the behavior selected,  together with no active request or delegation (case 3 of $\gamma$).   
\end{enumerate}
 
Observe that a composition plant $\Plant$, being a generator, is \emph{deterministic}, whereas the available behaviors being modelled may include nondeterministic evolutions. The fact is that such nondeterminism is encoded via uncontrollable events. 

\begin{figure*}[t!]
\resizebox{\textwidth}{!}{
\setlength{\tabcolsep}{3pt}
\begin{tabular}[b]{  c  c  c  c | c c c c | c c c c  | c c c c  }
 $g_0$ & $\tup{t_0,a_0,b_0,c_0}$ & $e$          &  0 & $g_8$ & $\tup{t_1,a_0,b_0,c_0}$ & $\amine$ &  1  & $g_{16}$ &$\tup{t_2,a_3,b_0,c_1}$ & $e$ & 0 & $g_{24}$ & $\tup{t_3,a_1,b_0,c_0}$ & $\adepo$ &  3 \\ \hline
$g_1$ & $\tup{t_0,a_0,b_0,c_0}$ & $\adrill$ &  0  & $g_9$ & $\tup{t_1,a_3,b_0,c_0}$ & $e$ &  0 & $g_{17}$ & $\tup{t_3,a_3,b_0,c_0}$ & $\adepo$ &  0 & $g_{25}$ & $\tup{t_3,a_3,b_0,c_0}$ & $\adepo$ &  1\\ \hline
$g_2$ & $\tup{t_0,a_0,b_0,c_0}$ & $\adrill$ &  1  & $g_{10}$ & $\tup{t_1,a_1,b_0,c_0}$ & $e$ &  0 & $g_{18}$ & $\tup{t_2,a_1,b_0,c_0}$ & $\aload$ & 0 & $g_{26}$ & $\tup{t_3,a_3,b_0,c_0}$ & $\adepo$ &  2 \\ \hline
$g_3$ & $\tup{t_0,a_0,b_0,c_0}$ & $\adrill$ &  2  & $g_{11}$ & $\tup{t_2,a_3,b_0,c_0}$ & $\aload$ &  0 & $g_{19}$ & $\tup{t_2,a_1,b_0,c_0}$ & $\aload$ &  1& $g_{27}$ & $\tup{t_3,a_1,b_1,c_0}$ & $e$ &  0\\ \hline
$g_4$ & $\tup{t_0,a_0,b_0,c_0}$ & $\adrill$ &  3  & $g_{12}$ & $\tup{t_2,a_3,b_0,c_0}$ & $\aload$ &  1 & $g_{20}$ & $\tup{t_2,a_1,b_0,c_0}$ & $\aload$ &  2& $g_{28}$ & $\tup{t_3,a_3,b_0,c_1}$ & $\adepo$ &  0   \\ \hline
$g_5$ & $\tup{t_1,a_0,b_0,c_0}$ & $\amine$ &  0 & $g_{13}$ & $\tup{t_2,a_3,b_0,c_0}$ & $\aload$ &  2 & $g_{21}$ & $\tup{t_2,a_1,b_0,c_0}$ & $\aload$ &  3& $g_{29}$ & $\tup{t_3,a_3,b_0,c_1}$ & $\adepo$ &  2 \\ \hline
$g_6$ & $\tup{t_1,a_0,b_0,c_0}$ & $\amine$ &  3 & $g_{14}$ &$\tup{t_2,a_3,b_0,c_0}$ & $\aload$ &  3 & $g_{22}$ & $\tup{t_2,a_1,b_0,c_0}$ & $e$ &  0  & $g_{30}$ & $\tup{t_3,a_3,b_0,c_1}$ & $e$ &  0 \\ \hline
$g_7$ & $\tup{t_1,a_0,b_0,c_0}$ & $\amine$ &  2  & $g_{15}$ & $\tup{t_2,a_3,b_0,c_1}$ & $e$ &  0 & $g_{23}$ & $\tup{t_2,a_1,b_0,c_0}$ & $e$ &  0 & $g_{31}$ & $\tup{t_4,a_3,b_1,c_0}$ & $\aunload$ &  0\\ \hline
 \end{tabular}
}
\end{figure*}
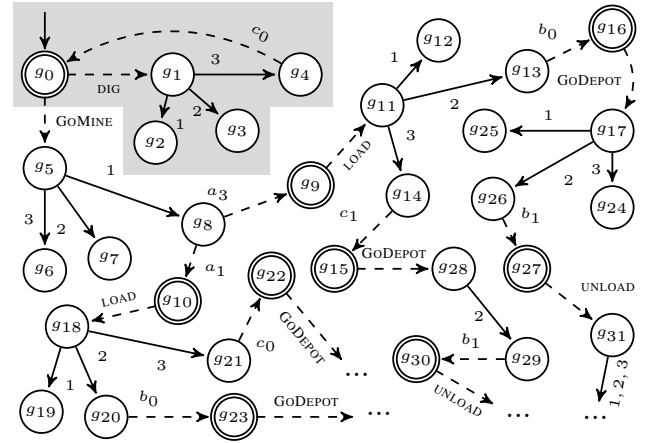
\begin{figure}[h!]
\resizebox{\columnwidth}{!}{\begin{tikzpicture}[node distance=1.5cm,auto]
\tikzset{ActionStyle/.style = {font=\tiny}}
\tikzstyle{every state}=[fill=white,font=\tiny,circle,draw=black,text=black,inner sep=1.pt,minimum size=5mm]

\begin{scope}[shift={(0cm,0cm)}]
\node[state,initial above,double]	(q0)                    {$g_0$};
\node[state]   (q1) [right of=q0]   {$g_1$};
\node[state]   (q2) [below of=q1,yshift=.7cm,xshift=-.2cm]   {$g_2$};
\node[state]   (q3) [below right of=q1,yshift=.4cm,xshift=-.3cm]   {$g_3$};
\node[state]   (q4) [right of=q1]   {$g_4$};
\node[state]   (q5) [below of=q0,yshift=.4cm]   {$g_5$};
\node[state]   (q6) [below of=q5,yshift=.3cm]   {$g_6$};
\node[state]   (q7) [below right of=q5,xshift=-.3cm]   {$g_7$};
\node[state]   (q8) [right of=q7,shift={(-.4cm,.4cm)}]   {$g_8$};
\node[state,double]   (q9) [above right of=q8,shift={(.2cm,-.6cm)}]   {$g_9$};
\node[state,double]   (q10) [below of=q8,shift={(-.3cm,.6cm)}]   {$g_{10}$};
\node[state]   (q11) [right of=q3,shift={(.2cm,.3cm)}]   {$g_{11}$};
\node[state]   (q12) [above right of=q11,shift={(-.4cm,-.3cm)}]   {$g_{12}$};
\node[state]   (q13) [right of=q11,shift={(.2cm,.4cm)}]   {$g_{13}$};
\node[state]   (q14) [below right of=q11,xshift=-.3in]   {$g_{14}$};
\node[state,double]   (q15) [below left of=q14,shift={(.2cm,.2cm)}]   {$g_{15}$};
\node[state,double]   (q16) [right of=q13,xshift=-.5cm,yshift=.5cm]   {$g_{16}$};
\node[state]   (q17) [below of=q16,yshift=.3cm]   {$g_{17}$};
\node[state]   (q24) [below of=q17,shift={(0cm,.6cm)}]   {$g_{24}$};
\node[state]   (q25) [left of=q17,shift={(-.0cm,0cm)}]   {$g_{25}$};
\node[state]   (q26) [below of=q25,shift={(.1cm,.7cm)}]   {$g_{26}$};
\node[state,double]   (q27) [below of=q17,shift={(-1cm,-.12cm)}]   {$g_{27}$};
\node[state]   (q18) [left of=q10,shift={(.2cm,-.3cm)}]   {$g_{18}$};
\node[state]   (q19) [below of=q18,shift={(-.3cm,.5cm)}]   {$g_{19}$};
\node[state]   (q20) [below right of=q18,xshift=-.6cm]   {$g_{20}$};
\node[state]   (q21) [below of=q8,shift={(.3cm,-.1cm)}]   {$g_{21}$};
\node[state,double]   (q22) [right of=q21,shift={(-1cm,1cm)}]   {$g_{22}$};
\node[state,double]   (q23) [right of=q20,shift={(0cm,0cm)}]   {$g_{23}$};
\node[state]   (q28) [right of=q15,shift={(-.1cm,0cm)}]   {$g_{28}$};
\node[state]   (q29) [below right of=q28,shift={(-.2cm,.0cm)}]   {$g_{29}$};
\node[state,double]   (q30) [left of=q29,shift={(.2cm,.0cm)}]   {$g_{30}$};
\node[state]   (q31) [below of=q24,shift={(.0cm,.0cm)}]   {$g_{31}$};

\node[] (c1) [below of=q29,shift={(-.1cm,.8cm)}]   {$\cdots$};
\node[] (c2) [below of=q22,shift={(1cm,.3cm)}] {$\cdots$};
\node[] (c3) [right of=q23,shift={(.2cm,0cm)}] {$\cdots$};
\node[] (c4) [below of=q31,shift={(-.2cm,.5cm)}] {$\cdots$};

\node[] (dummy) [above of=q2,yshift=-.1cm] {};

\path	
(q0) edge[dashed]  node[swap,ActionStyle] {$\adrill$}  (q1)
(q1) edge[]  node[ActionStyle] {$1$}  (q2)
(q1) edge[below]  node[near start,ActionStyle] {$2$}  (q3)
(q1) edge[]  node[ActionStyle,near start] {$3$}  (q4)
(q4) edge[bend right,dashed]  node[sloped,very near start,above,ActionStyle] {$c_0$}  (q0)
(q0) edge[dashed]  node[ActionStyle] {$\amine$}  (q5)
(q5) edge[]  node[swap,ActionStyle] {$3$}  (q6)
(q5) edge[]  node[swap,ActionStyle] {$2$}  (q7)
(q5) edge[]  node[near start,ActionStyle] {$1$}  (q8)
(q8) edge[dashed]  node[near start,ActionStyle] {$a_3$}  (q9)
(q8) edge[dashed]  node[near start,ActionStyle] {$a_1$}  (q10)
(q9) edge[dashed]  node[sloped,below,ActionStyle] {$\aload$}  (q11)
(q11) edge[]  node[ActionStyle] {$1$}  (q12)
(q11) edge[below]  node[ActionStyle] {$2$}  (q13)
(q11) edge[]  node[ActionStyle] {$3$}  (q14)
(q14) edge[dashed]  node[swap,ActionStyle] {$c_1$}  (q15)
(q13) edge[dashed]  node[ActionStyle] {$b_0$}  (q16)
(q16) edge[dashed,bend left]  node[swap,ActionStyle] {$\adepo$}  (q17)
(q17) edge[]  node[swap,ActionStyle] {$1$}  (q25)
(q17) edge[]  node[swap,ActionStyle] {$3$}  (q24)
(q17) edge[]  node[ActionStyle] {$2$}  (q26)
(q26) edge[dashed]  node[ActionStyle] {$b_1$}  (q27)
(q10) edge[dashed]  node[sloped,above,ActionStyle] {$\aload$}  (q18)
(q18) edge[]  node[ActionStyle] {$1$}  (q19)
(q18) edge[]  node[ActionStyle] {$2$}  (q20)
(q18) edge[]  node[swap,near end,ActionStyle] {$3$}  (q21)
(q21) edge[dashed]  node[swap,near start,ActionStyle] {$c_0$}  (q22)
(q20) edge[dashed]  node[near start,ActionStyle] {$b_0$}  (q23)
(q15) edge[dashed]  node[ActionStyle] {$\adepo$}  (q28)
(q28) edge[]  node[near start,below,ActionStyle] {$2$}  (q29)
(q29) edge[dashed]  node[above,ActionStyle] {$b_1$}  (q30)
(q27) edge[dashed]  node[ActionStyle] {$\aunload$}  (q31)

(q22) edge[dashed]  node[sloped,below,ActionStyle] {$\adepo$}  (c2)
(q23) edge[dashed]  node[sloped,above,ActionStyle] {$\adepo$}  (c3)
(q30) edge[dashed]  node[sloped,below,ActionStyle] {$\aunload$}  (c1) 
 (q31) edge[]  node[sloped,below,ActionStyle] {$1,2,3$}  (c4)

;

\begin{pgfonlayer}{background}
\node [fill=black!15,fit=(q0) (q1) (q4) (dummy)] {};
\node [fill=black!15,fit=(q2) (q3)] {};
\end{pgfonlayer}

\end{scope}

\end{tikzpicture}}
\caption{Plant $\Plant$ for the example in Figure~\ref{fig:ambientIntel-system} (partial). Double circled states are marked, so any word prefix ending in one of these states is marked. Dashed transitions correspond to uncontrollable events, solid ones to controllable events (delegations). State components are listed in the table.}
\label{fig:plant}
\end{figure}

Hence, the final step in an action request delegation process may yield multiple plant states, one per nondeterministic evolution of the selected behavior. Also, such resulting states are to be considered ``marked,'' in that a complete delegation process has been completed.
If, however, the chosen behavior is unable to legally execute the active request from its current state, then no transition is defined and the plant (non-marked) state is a dead-end.  

\begin{example}
Figure~\ref{fig:plant} depicts the (partial) plant for the composition problem of Figure~\ref{fig:ambientIntel-system}. 
Each complete delegation process of action requests corresponds, in the plant, to three consecutive events in $(\BAct_t \concat \Services
\concat \Nondet)$.

After each uncontrollable event representing a target request, three delegations---to available behaviors $\B_1,\B_2$ and $\B_3$---are \emph{always} possible.
For instance, the nodes in the greyed area represent the complete delegation of the digging action from the initial plant (and composition) state $g_0$. The event $\adrill$ represents the action request; that is uncontrollable, and hence always enabled. The resulting state $g_1$ registers such request. Then, three distinct controllable events embody the three possible delegations, one per  available behavior. However, only behavior $\B_3$ can legally perform action $\adrill$ from its initial state (see Figure~\ref{fig:ambientIntel-system}) to successor state $c_0$. Hence, a further uncontrollable event ($c_0$ itself) is used to model the looping transition evolution of $\B_3$. 
In general, there could be multiple uncontrollable evolutions if the delegated behavior behaves nondeterministically; see for example, plant state $g_8$ where behaviour $\B_1$ may evolve in two ways.

Of course, delegations reaching dead-end states are not desirable (e.g., delegation $1$ in $g_{18}$). 
However, not reaching an immediate dead-end is not enough to capture the composition requirements. Indeed, whereas delegation to $\B_2$ and $\B_3$ will avoid immediate dead-ends in $g_{18}$, only the latter will be part of a composition solution (see later 
Figure~\ref{ex:CG}, state $2$). 
\end{example}

With the plant built, the question is what language one would like to control. 
The answer is simple: we aim to control exactly the marked language of the composition plant, that is, 
\[
	\Specc = \Mkd(\Plant).
\]
In other words, we seek for ways of always controlling the plant so as to eventually be able to reach the end of every request-delegation process. 
Observe that, contrary to intuition, the target behavior $\Targ$ is \emph{not} used to derive the language specification, except in that it is embedded into the plant itself. 
This is not surprising, as the the target is one of the components generating uncontrollable events (the other being the evolution of available behaviors).
 
We shall claim that the ability to control $\Specc$ in plant $\Plant$ amounts to the ability to compose $\T$ in system $\S$. 
To that end, we first show an important technical result stating that
set of $(P,\tau)$-induced system histories $\H_{\Comp,\tau}$ is in
bijection with the set of traces in $\TSpecc$ when $P$ is a composition controller and $\tau$ a trace of $\Targ$. 
This appears evident when carefully inspecting Figure~\ref{fig:plant}, and is formalized in the following lemma.
We use mapping $\word(h) \in(\BAct_t \concat \Services \concat \Nondet)^{\card{h}}$ to translate a system history (i.e., a finite trace of the enacted system $\B_\S$) into words generated by composition plant $\Plant$.

\begin{lemma}\label{lemma:final}
Controller $\Comp$ is a composition for target $\Targ$ in system $\S$ \emph{iff} for each target trace $\tau$ and system history $h\in \H_{\Comp,\tau}$ we have that $\word(h)\in\TSpecc$, where:
\[
\begin{array}{l}
	\word(\vec{b}_0\goto{a_1,j_1}\vec{b}_1\goto{a_2,j_2} \ldots \goto{a_\ell,j_\ell} \vec{b}_\ell) = \\
	( a_1 \concat j_1 \concat  \stateproj_{j_1}(\vec{b}_1)) \concat
 			\; \ldots \; \concat ( a_\ell \concat j_\ell \concat \stateproj_{j_\ell}(\vec{b}_\ell) ).
\end{array}
\]
Functions $\stateproj_i : G \ra B_i$\footnote{We extend the function $\stateproj_i$ to map a plant state to  corresponding behavior state.} , with $i\in\Services$, project the state of $i$-th behavior in a plant state, that is, 
$\stateproj_i(\tup{t,\vec{b},a,j})=st_i(\vec{b})=b_i$. Analogously, for the target, $\stateproj_t(\tup{t,\vec{b},a,j}) = t$.
\end{lemma}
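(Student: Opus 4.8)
The lemma establishes a bijection-style correspondence: controller $\Comp$ is a composition iff every $(P,\tau)$-induced system history $h$ maps (via $\word$) into the supremal controllable sublanguage $\TSpecc = \supremalC(\Specc \cap \Mkd(\Plant))$.

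Let me understand the pieces:
- A composition controller $\Comp$ realizes all target traces. This means: for all target traces $\tau$ and all induced histories $h \in \H_{\Comp,\tau}$ with $|h| < |\tau|$, there's always a legal next step.
- $\TSpecc$ is the supremal controllable sublanguage of $\Specc = \Mkd(\Plant)$.
- $\word(h)$ translates a system history into a plant word by mapping each step $\vec{b}_{i-1} \goto{a_i,j_i} \vec{b}_i$ into three events: request $a_i$, delegation $j_i$, projected successor state $\stateproj_{j_i}(\vec{b}_i)$.

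**Key Observations**

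The plant's structure: each "round" is exactly three events (request, delegation, evolution). So $\word(h)$ for a history of length $\ell$ is a word of length $3\ell$ in the plant, ending in a marked state (since after the third event, the request/delegation components reset to $(e,0)$).

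The crucial insight connecting composition to controllability: An induced history $h$ corresponds to a specific sequence of choices. The controllability of the marked language captures exactly the property that no uncontrollable event (target request OR behavior nondeterministic evolution) can push you into a dead-end.

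**My Proof Sketch**

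I'd prove both directions.

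*Forward ($\Rightarrow$): $\Comp$ is composition $\implies$ $\word(h) \in \TSpecc$.*

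Assume $\Comp$ realizes $\Targ$. Take any target trace $\tau$ and $h \in \H_{\Comp,\tau}$. I need $\word(h) \in \TSpecc$.

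First, $\word(h) \in \Specc = \Mkd(\Plant)$: by construction of $\word$, the word ends at a marked plant state (the third event per round returns to request-$e$, delegation-$0$). Each three-event block is a valid plant transition sequence because: the request $a_i$ is a legal target action (so case 1 of $\gamma$ applies), the delegation $j_i = \Comp(h_{<i}, a_i)$ is a legal delegation (case 2), and $\stateproj_{j_i}(\vec{b}_i)$ is a legal evolution of behavior $j_i$ (case 3, since the composition guarantees a legal successor exists).

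The harder part is membership in $\TSpecc$, the *supremal controllable* sublanguage. I'd argue: the set $W = \{\word(h) : h \in \H_{\Comp,\tau}, \tau \text{ a target trace}\}$ together with its prefixes forms a controllable language. Controllability (Definition~\ref{def:controllable}) requires: if $w \in \overline{W}$ and $w\sigma \in L(\Plant)$ with $\sigma \in \Act_u$, then $w\sigma \in \overline{W}$. The uncontrollable events are target actions and behavior states. I must show every legal uncontrollable extension stays inside. This is exactly where the composition property bites: since $\Comp$ realizes ALL target traces, any legal request-event from a marked prefix is handled (giving a valid continuation); and since realization requires handling ALL nondeterministic successors $\vec{b}'$ (clause (ii) of realization/nd-simulation), any legal behavior-evolution event also stays inside. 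Thus $\overline{W}$ is controllable, so $W \subseteq \TSpecc$ by supremality.

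*Backward ($\Leftarrow$): $\word(h) \in \TSpecc$ for all induced $h$ $\implies$ $\Comp$ is composition.*

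Suppose every induced $\word(h) \in \TSpecc$. I must show $\Comp$ realizes every target trace, i.e., for any $h \in \H_{\Comp,\tau}$ with $|h|<|\tau|$, there's a legal successor with delegation $j_h = \Comp(h, a_{|h|+1})$. Since $\word(h) \in \TSpecc \subseteq \overline{\Specc}$ and $\TSpecc$ is controllable and marked-based, I can extend: the next target request $a_{|h|+1}$ is an uncontrollable event legal in $\Plant$, so by controllability $\word(h) \concat a_{|h|+1} \in \overline{\TSpecc}$. Then the delegation $j_h = \Comp(h,a_{|h|+1})$ must be enabled (it's controllable, but it lies on a word in $\TSpecc$ since $h$'s continuation is induced). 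By controllability again, every uncontrollable behavior-evolution event after this delegation stays in $\overline{\TSpecc}$, guaranteeing a legal successor $\vec{b}_{|h|+1}$ exists. Hence $\Comp$ realizes $\tau$.

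**The Main Obstacle**

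The hard part is the interplay between the two kinds of uncontrollable events and the supremality of $\TSpecc$. Specifically, in the forward direction I must show $\overline{W}$ is controllable as a sublanguage of $\Mkd(\Plant)$—and that it's contained in the *supremal* one. This requires carefully verifying that Theorem~\ref{theo:supNSC}'s assumption (closure under marked-prefixes) holds for $\Specc$, and that the nondeterministic-successor clause of the composition definition translates precisely into closure under the uncontrollable behavior-state events. I would need to handle the three-phase structure of plant rounds carefully: a prefix ending mid-round (after a request but before delegation, or after delegation before evolution) is non-marked, and I must track how controllability propagates across these intermediate non-marked states. Getting the alignment between "length of history" $|h|$ and "plant word length" $3|h|$ exactly right, and ensuring the bijection respects marked states, is the delicate bookkeeping.
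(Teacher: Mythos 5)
Your forward direction is sound and in fact takes a cleaner route than the paper's own proof: instead of a case analysis by contradiction, you exhibit the set of words of induced histories (closed under all uncontrollable request and evolution events precisely because $\Comp$ realizes every target trace and because $\H_{\Comp,\tau}$ contains every nondeterministic successor) as a controllable sublanguage of $\Specc=\Mkd(\Plant)$ and conclude by supremality of $\TSpecc$. The genuine gap is in the backward direction. The step ``the delegation $j_h=\Comp(h,a_{\card{h}+1})$ must be enabled \ldots\ since $h$'s continuation is induced'' is circular: the continuation $h\goto{a,j_h}\vec{b}'$ belongs to $\H_{\Comp,\tau}$ only if behavior $j_h$ can actually execute $a$ from $\stateproj_{j_h}(\last(h))$, which is exactly what you are trying to establish. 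If $\Comp$'s delegation at $h$ is illegal, the induced set simply contains no extension of $h$, so the hypothesis ``$\word(h')\in\TSpecc$ for every induced $h'$'' says nothing about $\word(h)\concat a\concat j_h$; and controllability of $\TSpecc$ after the uncontrollable request $a$ only guarantees that \emph{some} delegation can be extended inside $\Closure{\TSpecc}$ (delegation events are controllable, so the supervisor may disable the one $\Comp$ picked), not the particular index $j_h$. Concretely, a controller that delegates the very first request to a behavior unable to perform it induces only the length-zero history, whose word $\epsilon$ lies in $\TSpecc$ whenever $\TSpecc\neq\emptyset$, yet such a controller is not a composition; so the implication cannot be recovered from the membership of the already-induced words alone.

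The paper's own argument sidesteps this by working with the one-step extensions chosen by the controller: it manipulates words of the form $\word(h)\concat a\concat j\concat b'$ with $j=\Comp(h,a)$, i.e.\ it reads the correspondence as covering the controller's next request--delegation--evolution block rather than only the words of histories that happened to be induced, and from $\word(h)\concat a\concat j\concat b'\in\TSpecc\subseteq\Gtd(\Plant)$ it reads off, via the definition of $\gamma$, that $b'\in\delta_j(\stateproj_j(\last(h)),a)$, so the delegation is legal and the history extends. To repair your proof you need the same strengthening: either prove the equivalence for these extended words, or add an argument showing that, under your hypothesis, the delegation output by $\Comp$ always lies among those enabled along $\Closure{\TSpecc}$ --- which, as noted, does not follow from controllability alone.
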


\begin{proof}
$(\Rightarrow)$ Assume by contradiction that there exists a composition $P$ such that for some target trace $\tau$ and induced history $h\goto{a,j}\vec{b}$, we have $P(h,a)=j$ but $\word(h) \concat a \concat  j \concat b' \not\in \TSpecc$, with $b'= \stateproj_j(\vec{b})$.
This implies that $\word(h) \concat a \concat j \concat b' $ is not allowed from the initial state $g_0$ of the plant, according to supervisor $\sup$ such that $\Mkd(\sup/\G) =  \TSpec$, i.e., either 
\begin{itemize}\itemsep=0pt
\item[$(1)$] $\word(h) \concat a \not\in
\Gtd(\Plant)$ or 
\item[$(2)$] $\word(h) \concat a \concat j \not\in
\Gtd(\Plant)$ or
\item[$(3)$] $\word(h) \concat a \concat j \concat b' \not\in
\Gtd(\Plant)$ or
\item[$(4)$] for all words $w\in\Mkd(\Plant)$ with $w>\word(h) \concat a \concat
j \concat  b'$ we have $w\not\in\Closure{\TSpecc}$. 
\end{itemize}
Case $(1)$ is not possible by construction of $\Plant$. Indeed,
according to $\gamma$, it is 
$w\concat a\in\Gtd(\Plant)$ for every
$w\in\Mkd(\Plant)$ such that $\delta_t(\stateproj_t(\gamma(g_0,w)),a)$ is defined in
$\Targ$.
Case $(3)$ implies, by definition of $\gamma$, that $b'_j \not\in \delta_j(b_j,a)$, with $b_j = \stateproj_j(\gamma(g_0,w))$. Hence, the action $a$ can
not be replicated by behavior $\Be{j}$ and, as a consequence, the
plant's state reached with $a$ is a dead-end. 
This contradicts the fact that $P$ is a composition
for $\Targ$ in $\S$. 
Indeed, let
$g=\gamma(g_0,w)$; note that this also implies
that
$\stateproj_t(g)\not\preceqnd\tup{\stateproj_1(g),\ldots,\stateproj_n(g)}$,
namely that the simulation is violated, as action $a$ can not be
replicated in the enacted system state $\tup{\stateproj_1(g),\ldots,\stateproj_n(g)}$.  
By following the same argument, we can also exclude case $(2)$. 
Finally, case $(4)$ implies that for any such word $w$ we have $w\concat\Act_u\cap\Gtd(\Plant)\not\subseteq\Closure{\TSpecc}$, i.e., there
exists a sequence of (uncontrollable) events leading to a 
state which is not coreachable, i.e., from where a marked state is not
reachable. Indeed, remember that $\Specc =
\Mkd(\Plant)$. Hence, since every action $a\in\BAct_t\subset\Act_u$ is always allowed by any supervisor and,
by construction of $\Plant$, $w\concat a\in\Gtd(\Plant)$ for every
$w\in\Mkd(\Plant)$, we can apply the same reasoning of $(3)$ and deduce that $\Comp$ is
not a composition. That is, there exists a target
trace $\tau'=\tau\goto{a}t_\ell$, with $h\in\H_{\Comp,\tau}$ and $w=\word(h)$, not realized by $\Comp$.

$(\Leftarrow)$ 
First of all, since $\Closure{\Specc}\cap\Gtd(\Plant)\subseteq \Specc$ and by the
previous assumption $\TSpecc\neq\emptyset$, then by
Theorem~\ref{theo:supNSC} a supervisor $\sup$ does exist. 
Hence, $\word(h) \concat a\concat j \concat b' \in \TSpecc$ iff there
exists a supervisor $\sup$ such that $\Mkd(\sup/\Plant)=\TSpecc$,
$a\in \sup(\word(h))$, $j\in \sup(\word(h) \concat a)$ and
$b'\in \sup(\word(h) \concat a \concat j)$. 
Then, remember that
$\BAct_t\subset\Act_u$ and hence all target action are always allowed
by $\sup$. Similarly, the event set $\Nondet$ is uncontrollable as well.
Assume by contradiction that $\word(h) \concat a\concat j \concat b' \in
\TSpecc$ but it does not
exist any composition $\Comp$ such that $\Comp(h,a)=j$. By
definition of composition, this implies that there exists a target
trace $\tau=t_0\goto{a_1}\cdots t_k$ with $a=a_{\card{\tau}}$ such
that for some history $h\in\H_{\Comp,\tau}$ we have that
$\delta_\S(last(h),a,j)$ is not defined in the system behavior
$\B_\S$ built out of $\Behaviors$. 
This means that either $\delta_t(\stateproj_t(last(h)),a)$ is not
defined or behavior $\Be{j}$ can not perform this action from its
current state $\stateproj_j(last(h))$, i.e., $b'\neq
\delta_j(\stateproj_j(last(h)),a)$. Again, observe how
this also implies that
$\stateproj_t(last(h))\not\preceq\tup{\stateproj_1(last(h)),\ldots,\stateproj_n(last(h))}$. 
In other words, according to
$\gamma$, $\word(h)\concat a\concat
j\concat b' \not\in\Gtd(\Plant)$ . If this is the case, then either $a\not\in \sup(\word(h))$
or $j\not\in \sup(\word(h) \concat a)$ or $b'\not\in \sup(\word(h)
\concat a \concat j)$ and we get a contradiction. 
\end{proof}

The above lemma is the key to prove our main results of this section, namely, that
supervisors able to control the specification $\Specc$ in plant $\Plant$ correspond \emph{one-to-one} with composition (solution) controllers for building target $\T$ in available system $\S$.
To express such results, we first need to relate supervisors and controllers.  

\begin{definition}
Let $\sup$ be a supervisor for composition plant $\Plant$.
A controller $\Comp_\sup : \H \times \BAct \ra \set{1,\ldots,n}$ is \defterm{induced by} $\sup$ \emph{iff} $\Comp_\sup(h,\sigma) \in
\sup(\word(h)\concat \sigma)$, for every $h \in \H$ and $\sigma \in \BAct$.
\end{definition}

In other words, a 
$\Comp_\sup$ is induced by a supervisor
$\sup$ iff its delegations fall into the set of ``delegation events'' allowed by $\sup$. Clearly, a supervisor can induce many controllers. 

The main result of this section states that the supremal of the specification is controllable by some supervisor iff a solution to the composition problem exists. Moreover, every such supervisor induces controllers that are in fact compositions.

\begin{theorem}[Soundness]\label{theo:soundness}
There exists a nonblocking supervisor $\sup$ such that $\Mkd(\sup/\Plant)=\TSpecc\neq\emptyset$ \emph{iff} there exists a composition $\Comp$ for $\Targ$ in $\S$. 
In particular, every controller $P_V$ induced by $\sup$ is a composition for $\T$ in $\S$.
\begin{proof}
$(\Rightarrow)$
Assume by contradiction that for some
controller $\Comp_\sup$ there exists a  target trace
$\tau=t_0\goto{a_1}\cdots \goto{a_\ell}t_\ell$ and an 
induced system history $h\in\H_{\Comp_\sup,\tau}$ such that either
$(1)$ $\Comp_\sup(h,a_\ell)$ is not defined or 
$(2)$ $\Comp_\sup(h,a_\ell)=j$ but 
$\delta_j(\stateproj_j(\last(h)),a_\ell)$ is not defined.
By Lemma~\ref{lemma:final}, it means that $\word(h)\concat a_\ell \concat
j\not\in\Closure{\TSpecc}$. More precisely, $(1)$ implies that $\word(h) \concat a_\ell \not\in
\Gtd(\Plant)$ whereas $(2)$ implies that $a_\ell\not\in
\sup(\word(h))$ and $j\not\in \sup(\word(h) \concat
a_\ell)$.
Hence, either $\TSpecc = \emptyset$, or $\Comp_\sup$ does not realize
the target trace $\tau$ and we contradict Lemma~\ref{lemma:final}. 

$(\Leftarrow)$ By Lemma~\ref{lemma:final}, if such $\Comp$ exists then
$\TSpecc\neq\emptyset$.
\end{proof}
\end{theorem}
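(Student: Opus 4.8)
The plan is to reduce Theorem~\ref{theo:soundness} almost entirely to Lemma~\ref{lemma:final}, which already establishes the bijection between $(P,\tau)$-induced system histories and words in $\TSpecc$. Since Lemma~\ref{lemma:final} gives a biconditional characterizing exactly when a controller $\Comp$ is a composition (namely, $\word(h)\in\TSpecc$ for all target traces $\tau$ and induced histories $h$), the two directions of the theorem should follow by combining this with the existence guarantees from Theorem~\ref{theo:supNSC}. First I would verify the standing hypotheses of Theorem~\ref{theo:supNSC} hold for our plant: specifically that $\Closure{\Specc}\cap\Mkd(\Plant)\subseteq\Specc$, which is immediate because $\Specc=\Mkd(\Plant)$ is itself a marked language, so it is trivially closed under marked-prefixes. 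This licenses us to invoke the existence of a nonblocking supervisor whenever $\TSpecc\neq\emptyset$.

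For the forward direction ($\Rightarrow$), I would assume such a nonblocking supervisor $\sup$ with $\Mkd(\sup/\Plant)=\TSpecc\neq\emptyset$ exists, take any induced controller $\Comp_\sup$, and argue by contradiction that it is a composition. Suppose $\Comp_\sup$ fails to realize some target trace $\tau=t_0\goto{a_1}\cdots\goto{a_\ell}t_\ell$ at some induced history $h$: either the delegation $\Comp_\sup(h,a_\ell)$ is undefined, or it returns an index $j$ for which $\delta_j(\stateproj_j(\last(h)),a_\ell)$ is undefined. In either case I would translate the failure into the language world via $\word$, concluding $\word(h)\concat a_\ell\concat j\not\in\Closure{\TSpecc}$, which directly contradicts Lemma~\ref{lemma:final} (the ``if'' direction applied to $\Comp_\sup$). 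The bookkeeping here is to keep straight which of the three plant transitions (request, delegation, behavior-evolution) is the one that breaks, and to use that $\BAct_t$ and $\Nondet$ are uncontrollable so that the only place the supervisor can interfere is at the delegation step.

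The reverse direction ($\Leftarrow$) is the short one: if a composition $\Comp$ exists, then by the ``only if'' direction of Lemma~\ref{lemma:final} we have $\word(h)\in\TSpecc$ for every induced history, so $\TSpecc\neq\emptyset$; then the hypothesis $\Closure{\Specc}\cap\Mkd(\Plant)\subseteq\Specc$ together with Theorem~\ref{theo:supNSC} yields the desired nonblocking supervisor $\sup$ with $\Mkd(\sup/\Plant)=\TSpecc$. The ``in particular'' clause about every induced controller being a composition is really just a restatement of what the forward direction establishes, so I would fold it into that argument rather than proving it separately.

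The main obstacle I anticipate is not conceptual but bookkeeping: making the translation between the operational failure of a controller and the language-theoretic statement $\word(h)\concat a_\ell\concat j\not\in\Closure{\TSpecc}$ fully precise, since a single composition step corresponds to a length-three block $a\concat j\concat b'$ in the plant word and the failure can occur at any of these three sub-transitions. I would need to be careful that ``$\Comp_\sup(h,a_\ell)$ undefined'' corresponds to the request itself already leaving $\Gtd(\Plant)$ (i.e. $\delta_t$ undefined), while a bad delegation corresponds to the behavior-evolution transition being absent—and to observe that because target actions are always enabled, the only genuine control decision, and hence the only place a composition can go wrong, is the choice of $j$. Everything else reduces to citing Lemma~\ref{lemma:final} and Theorem~\ref{theo:supNSC}.
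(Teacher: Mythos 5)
Your proposal is correct and follows essentially the same route as the paper's own proof: both directions are reduced to Lemma~\ref{lemma:final}, the forward direction by contradiction on the two failure cases (undefined delegation vs.\ a delegated behavior unable to perform the action) translated into word-level membership via $\word$, and the reverse direction via non-emptiness of $\TSpecc$ plus Theorem~\ref{theo:supNSC}. Your explicit check that $\Specc=\Mkd(\Plant)$ trivially satisfies the marked-prefix-closure hypothesis of Theorem~\ref{theo:supNSC} is a detail the paper only handles implicitly (inside the proof of Lemma~\ref{lemma:final}), so if anything your write-up is marginally more complete.
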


\noindent
Furthermore, (nonblocking) supervisors are ``complete'' in that they embed every possible composition controller.

\begin{theorem}[Completeness]\label{theo:completness}
Given a nonblocking supervisor $\sup$ such that $\Mkd(\sup/\Plant)=\TSpecc$, every composition $\Comp$ for $\Targ$ in $\S$ is induced by $\sup$.
\end{theorem}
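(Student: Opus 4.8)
The plan is to prove Theorem~\ref{theo:completness} as essentially the converse reading of the soundness argument, using Lemma~\ref{lemma:final} together with the nonblocking property of $\sup$ as the two workhorses. I fix an arbitrary composition $\Comp$ for $\Targ$ in $\S$. By the $(\Rightarrow)$ direction of Lemma~\ref{lemma:final}, for every target trace $\tau$ and every induced system history $h \in \H_{\Comp,\tau}$ we have $\word(h) \in \TSpecc$. The goal is to establish the induced-by condition for $\Comp$, namely $\Comp(h,a) \in \sup(\word(h)\concat a)$ on every history $h$ reachable under $\Comp$ and every legal next request $a$.

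First I would fix such a history $h \in \H_{\Comp,\tau}$ together with an action $a$ for which $\tau \goto{a} t'$ is again a target trace, and set $j = \Comp(h,a)$. Because $\Comp$ is a composition it realizes the extended trace $\tau' = \tau \goto{a} t'$, so for every nondeterministic outcome the extended history $h' = h \goto{a,j}\vec{b}$ lies in $\H_{\Comp,\tau'}$. Applying Lemma~\ref{lemma:final} to $h'$ then yields $\word(h') = \word(h)\concat a \concat j \concat b' \in \TSpecc = \Mkd(\sup/\Plant)$, where $b' = \stateproj_j(\vec{b})$.

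The key step invokes the nonblocking property: since $\Gtd(\sup/\Plant) = \Closure{\Mkd(\sup/\Plant)}$, every prefix of $\word(h')$ --- in particular $\word(h)\concat a \concat j$ --- belongs to $\Gtd(\sup/\Plant)$. Unfolding the recursive definition of $\Gtd(\sup/\Plant)$ on the string $\word(h)\concat a \concat j = w\concat j$ with $w = \word(h)\concat a$ forces both $w \in \Gtd(\sup/\Plant)$ and $j \in \sup(w)$; the latter is exactly $\Comp(h,a) \in \sup(\word(h)\concat a)$, i.e.\ the induced-by condition. As $h$ and $a$ were arbitrary among the reachable/legal pairs, $\Comp$ is induced by $\sup$. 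The three-step structure of $\gamma$ (request, delegation, evolution) is what guarantees that the intermediate words $\word(h)\concat a$ and $\word(h)\concat a\concat j$ are genuine plant strings on which $\sup$ is defined, so the unfolding is legitimate.

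The main obstacle I anticipate is the quantification mismatch in the definition of ``induced by'': it ranges over all $h \in \H$, whereas a composition only constrains delegations on histories actually reachable while it drives $\Targ$. On an unreachable history $\Comp$ may in principle delegate to an index that $\sup$ disables, so the literal statement must be read up to behavior on reachable histories (equivalently, one argues the induced-by relation on exactly those $h$ occurring in some $\H_{\Comp,\tau}$, which is all that realization depends on). A secondary technical point to discharge is that $\word(h)\concat a$ really is the intended prefix: one checks that the uncontrollable request event $a$ is enabled after $\word(h)$ --- which holds because $\BAct_t \subseteq \Act_u$ and $a$ is legal in the target from $\stateproj_t(\gamma(g_0,\word(h)))$ --- so that the delegation event $j$ is indeed the next controllable event whose membership in $\sup(\word(h)\concat a)$ we read off.
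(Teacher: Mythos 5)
Your proof is correct and follows essentially the same route as the paper's: both hinge on Lemma~\ref{lemma:final}, the paper arguing by contradiction while you argue directly and make explicit the supervisor-side bookkeeping (the nonblocking property, prefix closure, and the recursive definition of $\Gtd(\sup/\Plant)$) that the paper leaves implicit. Your caveat about the quantification over all of $\H$ in the ``induced by'' definition is fair, but it applies equally to the paper's own argument, which likewise only constrains delegations on induced histories and legal requests.
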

\begin{proof}
Assume by contradiction that there exists a composition $\Comp'$ which
can not be induced by $\sup$, that is, it is such that
$\Comp'(h,a_\ell) \not\in \sup(\word(h)\concat a_\ell)$ for some
target trace $\tau=t_0\goto{a_1}\cdots \goto{a_\ell} t_\ell$ and some
induced system history $h\in\H_{\Comp',\tau}$. More precisely, for
some $j\in\Services$, $\word(h)\cdot a_\ell \cdot j\not\in \TSpecc$
but $j\in \Comp'(h,a_\ell)$.
It is easy to see that, by Lemma~\ref{lemma:final}, $\Comp'$ can not
be a composition.
\end{proof}

Let us call supervisors of this sort \defterm{composition supervisors}.
These two results demonstrate the formal link between the two synthesis tasks, namely, synthesis of a composition controller and supervisor synthesis.

Recalling that $\Specc = \Mkd(\Plant)$ and the definition of
compositions, and as already hinted in the proof of Lemma~\ref{lemma:final}, we can also explicitly relate the notions of maximal controllable
sublanguage and \textsc{nd}-simulation.

\begin{corollary}
\label{corolllary:1} 
Given a plant $\Plant$ for $\S$ and $\T$ as above, if $\TSpecc\neq\emptyset$ then for any word $w\in\TSpecc$, we have that
$\stateproj_t(g)\preceqnd\tup{\stateproj_1(g),\ldots,\stateproj_n(g)}$
where $g=\gamma(g_0,w)$.
\end{corollary}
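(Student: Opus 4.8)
The plan is to exhibit a single \textsc{nd}-simulation relation that contains all the pairs named in the statement, and then invoke the fact that $\preceqnd$ is the \emph{greatest} such relation. Concretely, I would set
\[
R = \set{ (\stateproj_t(g), \tup{\stateproj_1(g),\ldots,\stateproj_n(g)}) \mid g = \gamma(g_0,w),\ w \in \TSpecc }
\]
and prove that $R$ is an \textsc{nd}-simulation of $\Targ$ by $\B_\S$; since then $R \subseteq \preceqnd$, the corollary is immediate, as every $w \in \TSpecc$ contributes exactly the required pair. Before starting the verification I would invoke Theorem~\ref{theo:supNSC}: since $\Specc = \Mkd(\Plant)$ trivially satisfies $\Closure{\Specc}\cap\Mkd(\Plant)\subseteq\Specc$ and $\TSpecc \neq \emptyset$ by hypothesis, there is a nonblocking supervisor $\sup$ with $\Mkd(\sup/\Plant) = \TSpecc$, hence $\Gtd(\sup/\Plant) = \Closure{\TSpecc}$. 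The one identity I would record for later use is $\Closure{\TSpecc}\cap\Mkd(\Plant) = \TSpecc$, which follows immediately from the definition $\Mkd(\sup/\Plant) = \Gtd(\sup/\Plant)\cap\Mkd(\Plant)$.

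The verification is pointwise. Fix $(t,\vec b) \in R$ witnessed by some $w \in \TSpecc$ and $g = \gamma(g_0,w)$; note that $g$ is marked, so it carries no pending request and no delegation. Given an action $a$, if $\delta_t(t,a)$ is undefined then $\Targ$ has no $a$-move and both clauses of the \textsc{nd}-simulation definition hold vacuously for any index. Otherwise let $t' = \delta_t(t,a)$. Since $a \in \BAct_t \subseteq \Act_u$ and, by case~1 of $\gamma$, $w\concat a \in \Gtd(\Plant)$, controllability of $\TSpecc$ forces $w\concat a \in \Closure{\TSpecc}$; nonblockingness then lets me extend $w\concat a$ to a marked word, and since the only events enabled from the request-recording state are delegations, this extension begins with some $j \in \Services$, so $w\concat a \concat j \in \Closure{\TSpecc}$. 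A delegation $j$ reaching a dead-end could not be extended to a marked word, so necessarily $\delta_j(b_j,a)\neq\emptyset$, which yields clause~(i): a transition $\vec b \goto{a,j}\vec b'$ exists in $\B_\S$. For clause~(ii), each such $\vec b'$ corresponds to an outcome $b'_j \in \delta_j(b_j,a)$, i.e.\ an uncontrollable event in $\Nondet \subseteq \Act_u$; controllability again gives $w\concat a \concat j \concat b'_j \in \Closure{\TSpecc}$, and since this word lands in a marked state, the recorded identity $\Closure{\TSpecc}\cap\Mkd(\Plant)=\TSpecc$ places it in $\TSpecc$ itself. Hence $(t',\vec b') \in R$, as required, and $R$ is closed under the simulation step.

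The delicate point, which I expect to be the main obstacle, is precisely clause~(ii): it is not enough to know that each behaviour outcome keeps us inside $\Closure{\TSpecc}$; I must conclude membership in $\TSpecc$ so that the successor pair re-enters $R$ and the coinductive argument closes. This is exactly where a naive ``reachable-state'' argument would fail, since a prefix of a controllable word need not be marked-admissible in general; the identity $\TSpecc = \Closure{\TSpecc}\cap\Mkd(\Plant)$, read off from the nonblocking supervisor, is what rescues it. A secondary subtlety is that local matchability of $a$ does not by itself imply simulation---this is why I build the entire relation $R$ and check the closure condition globally, rather than attempting a one-state contradiction in the spirit of the proof of Lemma~\ref{lemma:final}.
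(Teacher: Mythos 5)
Your proof is correct, but it takes a genuinely different route from the paper's. The paper argues by induction on the length of $w$: the base case $t_0 \preceqnd \vec{b}_0$ is imported wholesale from Theorem~\ref{theo:soundness} together with Theorem~\ref{th:sim_initial_states}, and the inductive step uses controllability of $\TSpecc$ (exactly as you do) to show that every target action enabled at $\stateproj_t(g)$ can be matched by some delegation leading back into $\TSpecc$. You instead bypass Theorems~\ref{theo:soundness} and~\ref{th:sim_initial_states} entirely: you exhibit the candidate relation $R$ of all pairs reachable via words of $\TSpecc$, verify both clauses of the \textsc{nd}-simulation definition directly --- clause~(i) via controllability of the request event plus extendability of $w\concat a$ to a marked word, clause~(ii) via the identity $\TSpecc=\Closure{\TSpecc}\cap\Mkd(\Plant)$ read off the nonblocking supervisor of Theorem~\ref{theo:supNSC} --- and conclude by maximality of $\preceqnd$. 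What your route buys is self-containedness and, in particular, an explicit treatment of clause~(ii), i.e., that \emph{every} uncontrollable outcome of the chosen delegation lands back in the relation so the coinductive argument closes; the paper's inductive phrasing leaves this point largely implicit. What the paper's route buys is brevity, since it recycles the already-proven soundness result for the base case rather than re-verifying the simulation conditions from scratch. One small remark: the nonblocking supervisor is not strictly needed to extend $w\concat a$ to a marked word, since any word in $\Closure{\TSpecc}$ is by definition a prefix of a word in $\TSpecc\subseteq\Mkd(\Plant)$; it is, however, a clean way to obtain the marked-prefix identity you rely on for clause~(ii), which is the genuinely delicate step you correctly isolate.
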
 

\begin{proof}
We proceed by induction on the length of $w$. The claim holds for
$g=g_0$ (by Theorem~\ref{theo:soundness} and Thorem~\ref{th:sim_initial_states}).  
Assume now it holds for $w'$, with $g'=\gamma(g_0,w')$, and consider any
word 
$w=w'\concat
a \concat j\concat b_j$ for some
action $a$, delegation $j$ and behavior state $b_j$. Since $a\in\Sigma_u$, and because
$\TSpecc$ is controllable, from Definition~\ref{def:controllable} it
follows that $w\in\TSpecc$ implies $w\concat a\in
\Closure{\TSpecc}$, and for the same reason
also $w\concat a\concat j \concat b_j\in
\TSpecc$ for some $j\in\Services$ and $b_j\in B_j$. Which means that for any
action $a$ such that $\stateproj_t(g')\goto{a} t$ for
some target state $t$, there exists an index $j$ such that
$\tup{\stateproj_1(g),\ldots,\stateproj_n(g)}\goto{a,j}\vec{b}'$ in
$\B_\S$, with $\stateproj_j(\vec{b})=b_j$.
\end{proof}

It remains to be seen, though, how to actually \emph{extract} finite representations of composition controllers from DES composition supervisors.

\begin{figure*}[t!]
\resizebox{\textwidth}{!}{
\begin{tabular}[b]{  c c | c c | c c | c c | c c | c c  }
    \hline			
    0 & $\tup{t_0,a_0,b_0,c_0}$ & 4 & $\tup{t_1,a_3,b_0,c_0}$ & 8 & $\tup{t_4,a_3,b_0,c_0}$ & 12 & $\tup{t_4,a_3,b_2,c_1}$ & 16 &  $\tup{t_1,a_1,b_0,c_1}$ &  20 & $\tup{t_2,a_1,b_0,c_1}$ \\ \hline
    1 & $\tup{t_1,a_3,b_0,c_0}$ & 5 & $\tup{t_2,a_1,b_0,c_1}$ & 9 & $\tup{t_4,a_0,b_0,c_1}$ & 13 &  $\tup{t_0,a_0,b_0,c_1}$ & 17 & $\tup{t_1,a_3,b_0,c_1}$ & 21 & $\tup{t_2,a_1,b_0,c_1}$\\ \hline
    2 & $\tup{t_1,a_1,b_0,c_0}$ & 6 &  $\tup{t_3,a_2,b_0,c_1}$ & 10 & $\tup{t_4,a_3,b_2,c_0}$ & 14 & $\tup{t_0,a_3,b_3,c_1}$ & 18 & $\tup{t_2,a_3,b_0,c_1}$ & &\\ \hline
    3 & $\tup{t_2,a_3,b_0,c_0}$ & 7&  $\tup{t_0,a_3,b_3,c_0}$& 11 & $\tup{t_3,a_3,b_1,c_0}$ & 15 & $\tup{t_3,a_3,b_1,c_1}$ & 19 & $\tup{t_2,a_3,b_0,c_1}$ &&\\ 
\hline
\end{tabular}
}
\end{figure*} 

\subsection{From Supervisors to Controller Generators}\
As discussed at the end of Section \ref{sec:preliminaries_bc}, a controller generator (CG) is a finite structure encoding all possible composition solutions---a sort of a universal solution----that, once computed, can be used at runtime to produce all possible target realizations.
Because of that, CGs have been shown to enjoy run-time flexibility and robustness properties, in that the executor can leverage on them to recover or adapt to behavior various types of execution failures (e.g., an available behavior breaking down completely)~\cite{DeGiacomoPatriziSardina:AIJ13}.
Next, we show that it is possible to to extract \emph{the} CG from composition supervisors. 

We start by noting that, since both languages $\Gtd(\Plant)$ and $\Specc$ are regular, they are implementable.
In fact~\citea{WonhamRamadge:SIAMJCO87} have shown that it is possible to compute a generator $\R$ that represents exactly the behavior of controlled system $\sup/\Plant$, for some supervisor $V$ able to control $\TSpecc$. 
Such generator $\R$ will capture not only the control actions of supervisor $V$, but also all internal (uncontrollable) events of the plant.
In a nutshell, extracting the controller generator amounts to projecting out the latter and transforming controllable events into behavior delegations. The whole procedure can be depicted as:
\vspace*{-.2in}
\begin{center}
\resizebox{\columnwidth}{!}{\begin{tikzpicture}[->,thin,node distance=2.5cm]
\tikzstyle{every state}=[circle,fill=none,draw=black,text=black,inner
sep=4pt,minimum size=4mm,font=\small]

\tikzset{ActionStyle/.style = {font=\small}}

\node[]		(0)                   {$\tup{\S,\T}$};
\node[draw=black,rectangle,right of=0]  (1)    {$\Plant$};
\node[]    	(2) [right of=1]   {$\TSpecc$};
\node[draw=black,rectangle]    (3) [right of=2]   {$\R$};
\node[draw=black,rectangle]    (4) [right of=3]   {$\CGDES$};

\path	 (0) edge[thick]  node[ActionStyle] {induces}  (1);
\path	 (1) edge[thick]  node[ActionStyle] {control}  (2);
\path	 (2) edge[thick]  	node[ActionStyle,above] {represented}  
							node[ActionStyle,below] {by} (3);
\path	 (3) edge[thick]  node[ActionStyle] {extraction}  (4);

\end{tikzpicture}}
\end{center}
\vspace{-.1in}

From a composition problem, a corresponding plant is first built. If the composition is solvable, the language $\TSpecc \not=\emptyset$ is controllable (Theorems~\ref{theo:soundness} and \ref{theo:completness}). 
In addition, we know that there exists a generator $\R$ representing a composition supervisor.

Given a state $y$ of $\R$, we denote with $[y]$ the tuple $\tup{\stateproj_t(y),\stateproj_1(y),\ldots,\stateproj_n(y)}$ extracting the full composition state from $\R$'s state $y$, where function $\stateproj_i(y)$ projects the local state of target and that of the available behaviors (this can be deterministically reconstructed from the event labels in $\R$ in linear time on the size of $\R$).

\begin{definition}
Let $\R = \tup{ \Act, Y, y_0 , \rho, Y_m }$ be the generator
representing a 
supervisor.
The \defterm{DES controller generator} is a finite-state structure  $\CGDES = \tup{A_t,\Services,Q,[y_0],\vartheta,\omega}$, where: 
\begin{itemize}
\item $A_t$ and $\Services$ are the set of target actions and behavior indexes, as before;

\item $Q = \set{ [y] \mid y\in Y,\; y=\rho(y_0,p),\; p\in(\BAct_t \concat \Services\concat \Nondet)^*}$ is the set of full composition states reachable from initial generator's state $y_0$; \footnote{Recall that \Nondet \ is the set of all behaviors states as defined on page~\pageref{def:succ}.} 

\item $[y_0]$ is the initial state of $\CGDES$;

\item $\vartheta : Q \times \BAct_t\times \Services \times Q$ is the transition relation such that $[y']\in\vartheta([y],\sigma,j)$ \emph{iff} $y' = \rho(y,\sigma \concat j \concat b'_j)$ for some $b_j' \in \Nondet$. 
That is, $\vartheta$ outputs a transition corresponding to the delegation of action $\sigma$ to the $j$-th module iff there exists a transition, labeled with $\sigma$, from its current state (namely, iff there exists a $\sigma$-successor $b'_j)$; and

\item $\omega : Q \times \BAct_t \ra 2^{\Services}$ is the behavior
  selection function, such that $\omega(q,\sigma) = \set{ j \mid
    \exists q' \in \vartheta(q,\sigma,j)}$, which just ``reads'' the function $\vartheta$.
\end{itemize}
\end{definition}

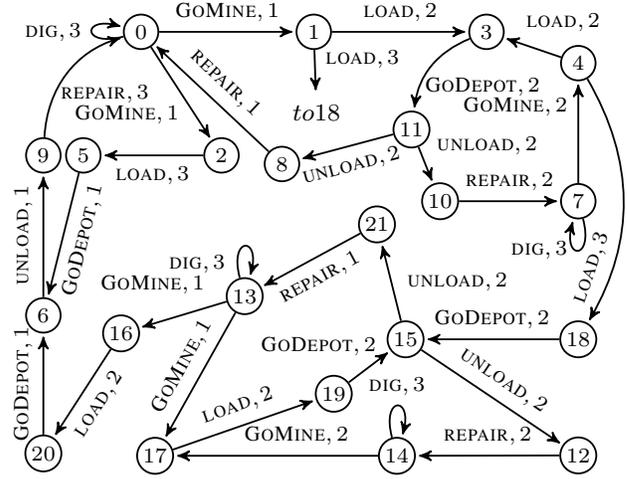
\begin{figure}
\centering
\resizebox{1\columnwidth}{!}{\begin{tikzpicture}[node distance=1.6cm,auto]
\tikzset{ActionStyle/.style = {font=\scriptsize}}
\tikzstyle{every state}=[fill=white,font=\scriptsize,circle,draw=black,text=black,inner sep=1pt,minimum size=4mm]

\begin{scope}[shift={(0cm,0cm)}]
\node[state]	(a0)  []                  {$0$};
\node[state]    	(a2) [right of=a0,shift={(0.4cm,0cm)}]   {$1$};
\node[state]    	(a4) [right of=a2,shift={(0.4cm,0cm)}]   {$3$};
\node[state]    	(a8) [below right of=a2] {$11$};
\node[state]    	(a12) [below right of=a8,shift={(-0.8cm,.3cm)}] {$10$};
\node[state]    	(a10) [left of=a8,shift={(.1cm,-.4cm)}] {$8$};
\node[state]    	(a3) [below right of=a0,shift={(-.2cm,-.3cm)}] {$2$};
\node[state]    	(a11) [below left of=a0,yshift=-.3cm] {$9$};
\node[state]    	(a19) [right of=a12] {$7$};
\node[state]    	(a23) [above of=a19] {$4$};
\node[state]    	(a9) [below of=a12,shift={(-0.4cm,0cm)}] {$15$};
\node[state]    	(a24) [below left of=a9,shift={(0.3cm,0.5cm)}] {$19$};
\node[state]    	(a6) [left of=a3,shift={(0cm,0cm)}] {$5$};
\node[state]    	(a7) [below of=a11,yshift=-.1in] {$6$};
\node[state]    	(a13) [below of=a8,shift={(-0.4cm,0.5cm)}] {$21$};
\node[state]    	(a16) [below left of=a13,shift={(-0.4cm,0.3cm)}] {$13$};
\node[state]    	(a5) [below of=a19] {$18$};
\node[state]    	(a14) [below of=a5,yshift=.1in] {$12$};
\node[state]    	(a18) [below of=a9,shift={(-0.1cm,.25cm)}] {$14$};
\node[state]    	(a20) [below left of=a16,shift={(-.3cm,.7cm)}] {$16$};
\node[state]    	(a25) [below of=a7,yshift=0in] {$20$};
\node[state]    	(a21) [left of=a18,shift={(-1.2cm,0cm)}] {$17$};

\node[] (c1) [below right of=a2,shift={(-1.1cm,0.2cm)}] {{\footnotesize$to 18$}};

\path	
(a0) edge[loop left]  node[left,ActionStyle,xshift=0cm] {$\adrill,3$}  (a0)
 (a0) edge[]  node[ActionStyle] {$\amine,1$}  (a2)
(a2) edge[]  node[ActionStyle] {$\aload,2$}  (a4)
(a2) edge[]  node[ActionStyle,pos=.2] {$\aload,3$}  (c1)
 (a4) edge[bend right]  node[ActionStyle,xshift=-.2cm] {$\adepo,2$}  (a8)
(a8) edge[]  node[sloped,below,ActionStyle] {$\aunload,2$}  (a10)
(a8) edge[]  node[ActionStyle] {$\aunload,2$}  (a12)
(a12) edge[]  node[ActionStyle] {$\arepair,2$}  (a19)
(a19) edge[]  node[near end,ActionStyle] {$\amine,2$}  (a23)
(a19) edge[loop below]  node[left,ActionStyle] {$\adrill,3$}  (a19)
 (a23) edge[]  node[swap,pos=.8,ActionStyle] {$\aload,2$}  (a4)
(a10) edge[]  node[sloped,above,ActionStyle] {$\arepair,1$}  (a0)
(a23) edge[bend left]  node[sloped,pos=.8,above,ActionStyle] {$\aload,3$}  (a5)
(a5) edge[]  node[above,ActionStyle] {$\adepo,2$}  (a9)
(a9) edge[]  node[swap,near start,ActionStyle] {$\aunload,2$}  (a13)
(a9) edge[]  node[sloped,above,ActionStyle] {$\aunload,2$}  (a14)
(a14) edge[]  node[above,ActionStyle] {$\arepair,2$}  (a18)
(a18) edge[]  node[above,pos=.4,ActionStyle] {$\amine,2$}  (a21)
(a18) edge[loop above]  node[ActionStyle] {$\adrill,3$}  (a18)
(a21) edge[]  node[sloped,above,ActionStyle] {$\aload,2$}  (a24)
(a24) edge[]  node[ActionStyle,xshift=0.2cm] {$\adepo,2$}  (a9)
(a16) edge[swap]  node[very near start,ActionStyle] {$\amine,1$}  (a20)
(a16) edge[swap]  node[sloped,above,ActionStyle] {$\amine,1$}  (a21) 
(a16) edge[loop above]  node[left,ActionStyle,shift={(-.1cm,-.2cm)}] {$\adrill,3$}  (a16)
(a20) edge[]  node[sloped,below,ActionStyle] {$\aload,2$}  (a25)
(a25) edge[]  node[sloped,above,ActionStyle] {$\adepo,1$}  (a7)
(a7) edge[]  node[sloped,above,ActionStyle] {$\aunload,1$}  (a11)
(a0) edge[]  node[swap,ActionStyle,xshift=.1cm] {$\amine,1$}  (a3)
(a3) edge[]  node[sloped,below,ActionStyle] {$\aload,3$}  (a6)
(a6) edge[]  node[sloped,below,ActionStyle] {$\adepo,1$}  (a7)
(a11) edge[bend left=35]  node[swap,ActionStyle,xshift=-.2cm] {$\arepair,3$}  (a0)
(a13) edge[]  node[sloped,below,ActionStyle] {$\arepair,1$}  (a16)

;


\end{scope}

\end{tikzpicture}}
\caption{$\CGDES$ for the example in  Figure~\ref{fig:ambientIntel-system}}
\label{ex:CG}
 \vspace{.15in}
\end{figure}

A controller generator $\CG$ for a composition problem is able to \emph{generate} controllers $\Comp$ such that $\Comp(h,\sigma) \in \CG(last(h),\sigma)$, where $h$ is a system history and $\sigma$ is a target action request (cf.~Section~\ref{sec:preliminaries_bc}). 
Similarly, we say here that  a DES controller generator $\CGDES$ \emph{generates} controllers $\Comp$ such that $\Comp(h,\sigma) \in \omega(last(h),\sigma)$.

\begin{example}
Figure~\ref{ex:CG} 
represents the complete DES CG for the plant in Figure~\ref{fig:plant}.
For instance, if the user requests action $\aload$ from 
state $1$ (namely, $\tup{t_1,a_3,b_0,c_0}$), the controller generator
allows both behaviors $\B_2$ and $\B_3$ to be scheduled, i.e.,
$\omega(\tup{t_1,a_3,b_0,c_0},\amine)=\set{2,3}$. Therefore, any
controller $\Comp$ generated by $\CGDES$ will be such that
$\Comp(h,\aload)\in\set{2,3}$, where $h$ is any system history such
that $last(h)=\tup{t_1,a_3,b_0,c_0}$.

Note how sequences of transition in the plant (an action request, a
delegation and a behaviour evolution) are compressed in $\CGDES$ into
a single transition. For instance, the sequence 
$g_0\goto{\adrill}g_1\goto{3}g_4\goto{c_0}g_0$ is combined into the 
$\CGDES$'s transition from state $0$ which is labelled $\tup{\adrill,3}$, such that $\omega(0,\adrill)=\set{3}$.  
\end{example} 

The following result demonstrates the correctness of our DES-based approach to compute controller generators.

\begin{theorem}\label{theo:extract_cg_correct}
A controller $\Comp$ is a composition of $\Targ$ in $\S$ iff it is generated by DES CG $\CGDES$. 
\begin{proof}
$(\Rightarrow)$ Assume that there exists a composition $\Comp$ that can not be generated by $\CGDES$. It means that for some target trace $\tau$ and induced history $h\in\H_{\Comp,\tau}$ it is $\Comp(h,\sigma)=j\not\in \CGDES(\last(h),\sigma)$ for some $\sigma$. By construction of $\CGDES$, this means that there is no $p = (\sigma\concat j \concat b'_j) \in  (\Act_t \concat
   \Services \concat \Nondet)$  such that $\rho(\last(h),p)$ is defined
   in $\R$. By definition of $\R$, it contradicts Theorem~\ref{theo:completness}.
$(\Leftarrow)$ Assume that there exists a controller $\Comp$ generated by $\CGDES$ which is not a composition. Similarly, this contradicts
Theorem \ref{theo:soundness}. 
\end{proof} 
\end{theorem}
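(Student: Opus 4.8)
The plan is to prove the biconditional by reducing both directions to the already-established soundness and completeness results (Theorems~\ref{theo:soundness} and~\ref{theo:completness}), using the fact that the DES controller generator $\CGDES$ is nothing more than a compact reencoding of the generator $\R$ that represents a composition supervisor. The crucial observation is that, by construction, the behavior selection function $\omega$ merely ``reads off'' which delegations are permitted by $\R$: we have $j \in \omega([y],\sigma)$ precisely when $\rho(y,\sigma \concat j \concat b'_j)$ is defined in $\R$ for some $b'_j \in \Nondet$. Thus membership of a delegation in $\CGDES$ coincides with membership in the supervisor's allowed events along the corresponding word.

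For the forward direction ($\Rightarrow$), I would argue by contradiction. Suppose $\Comp$ is a composition but is \emph{not} generated by $\CGDES$. Then for some target trace $\tau$ and induced history $h \in \H_{\Comp,\tau}$ we have $\Comp(h,\sigma) = j \notin \omega(\last(h),\sigma)$ for some request $\sigma$. By the definition of $\omega$ and $\vartheta$, this means there is no continuation $p = \sigma \concat j \concat b'_j$ with $\rho(\last(h),p)$ defined in $\R$; equivalently, the word $\word(h) \concat \sigma \concat j \concat b'_j$ lies outside the language of $\R$, hence outside $\TSpecc$. But Theorem~\ref{theo:completness} asserts that every composition is induced by the supervisor represented by $\R$, so $j$ must lie in $\sup(\word(h) \concat \sigma)$ and the corresponding word must be in $\TSpecc$---a contradiction.

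For the reverse direction ($\Leftarrow$), the argument is symmetric. Suppose $\Comp$ is generated by $\CGDES$ but is not a composition. Being generated means $\Comp(h,\sigma) \in \omega(\last(h),\sigma)$ for every relevant $h$ and $\sigma$, which by the reading of $\omega$ translates back into delegations allowed by the supervisor $\R$ represents, i.e.\ $\word(h) \concat \sigma \concat \Comp(h,\sigma) \concat b'_j \in \TSpecc$ for some successor state. Then Theorem~\ref{theo:soundness} tells us that any controller induced by that (nonblocking) supervisor is a composition, contradicting the assumption that $\Comp$ is not.

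The main obstacle---and the step deserving the most care---is justifying that the compression of three plant transitions (request, delegation, behavior evolution) into a single $\CGDES$ transition faithfully preserves the correspondence of Lemma~\ref{lemma:final} and the definition of ``induced by.'' Concretely, I must verify that $j \in \omega([y],\sigma)$ is genuinely \emph{equivalent} to $j \in \sup(\word(h)\concat\sigma)$ whenever $[y]$ is the composition state reached along $\word(h)$, so that ``generated by $\CGDES$'' and ``induced by $\sup$'' coincide as notions over controllers. Once this equivalence is made explicit, both directions collapse immediately onto Theorems~\ref{theo:soundness} and~\ref{theo:completness}, and the remaining reasoning is routine.
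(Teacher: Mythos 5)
Your proposal is correct and follows essentially the same route as the paper: both directions argue by contradiction and collapse onto Theorems~\ref{theo:soundness} and~\ref{theo:completness} via the observation that $\omega$ simply reads off which delegations $\R$ (and hence the supervisor it represents) permits. The equivalence you flag as the main obstacle---that ``generated by $\CGDES$'' coincides with ``induced by $\sup$'' along the word $\word(h)$---is exactly the step the paper handles implicitly with ``by construction of $\CGDES$'' and ``by definition of $\R$,'' so making it explicit only sharpens, and does not change, the argument.
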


Note how, from Corollary~\ref{corolllary:1} and the definition of $\CGDES$, we get that $\tup{t,b_1,\ldots,b_n}\in Q$ iff $t \preceqnd
\tup{b_1,\ldots,b_n}$, which matches the
definition of controller generator in~\cite{DeGiacomoPatriziSardina:AIJ13}.

Also, the DES-based approach is optimal w.r.t. computational complexity.

\begin{theorem}\label{theo:extract_cg_complexity}
Computing the DES controller generator $\CGDES$ can be done in exponential time in the number of available behaviors, and polynomial in their size.
\end{theorem}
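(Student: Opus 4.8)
The plan is to bound the size and construction cost of every structure in the chain $\tup{\S,\T}\ra\Plant\ra\R\ra\CGDES$ and then combine these bounds, since the overall cost is dominated by the largest intermediate object and only a constant number of stages are involved.

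First I would bound the composition plant $\Plant$. Directly from its definition, its state set has cardinality $\card{G}=\card{T}\cdot\prod_{i=1}^{n}\card{B_i}\cdot(\card{\BAct_t}+1)\cdot(n+1)$, and each state has at most $\card{\Act}=\card{\Services}+\card{\BAct_t}+\card{\Nondet}$ outgoing transitions. The product $\prod_{i=1}^{n}\card{B_i}$ is the only source of exponential growth: it is bounded by $b^{n}$ where $b=\max_i\card{B_i}$, hence exponential in the number $n$ of available behaviors but polynomial (indeed linear in each factor) in their individual sizes, and likewise polynomial in $\card{T}$ and $\card{\BAct_t}$. Building $\Plant$ can therefore be done in time linear in $\card{G}$, i.e., exponential in $n$ and polynomial in the sizes of $\T$ and the $\B_i$.

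Second, I would account for the computation of the supervisor generator $\R$. Because the language we control is $\Specc=\Mkd(\Plant)$ itself, the specification automaton coincides with the plant, so no separate (and potentially blowing-up) specification structure is introduced. The generator $\R$ representing $\sup/\Plant$ for a supervisor achieving $\TSpecc=\supremalC(\Specc\cap\Mkd(\Plant))$ is obtained by the standard fixpoint computation of the supremal controllable sublanguage~\cite{WonhamRamadge:SIAMJCO87}, which runs in time polynomial in the number of plant states; hence $\R$ has at most $\card{G}$ states and is produced in time polynomial in $\card{\Plant}$. The extraction of $\CGDES$ from $\R$ is cheaper still: as observed after the definition of $\CGDES$, each full composition state $[y]$ is reconstructed from the event labels of $\R$ in linear time, and both $\vartheta$ and $\omega$ are read off from length-three chains of consecutive events in $\R$, so the extraction is linear in $\card{\R}$, with $\card{Q}$ bounded by $\card{T}\cdot\prod_{i=1}^{n}\card{B_i}$.

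Finally, I would compose the bounds: every stage runs in time polynomial in $\card{\Plant}$, and $\card{\Plant}$ is itself exponential in $n$ and polynomial in the sizes of $\T$ and the $\B_i$, so composing a constant number of such steps preserves these bounds and yields the claim. The hard part is the middle step: one must argue that computing $\TSpecc$ stays polynomial in $\card{\Plant}$, which relies on the determinism of the generator $\Plant$ (guaranteed by construction, despite the nondeterminism of the $\B_i$ being encoded via uncontrollable events) so that the classical polynomial-time SCT algorithms for $\supremalC$ apply without the exponential determinization penalty; the only other subtlety worth spelling out is that the specification is supplied implicitly by $\Plant$'s marked states rather than by an independent automaton, which is precisely what prevents a second exponential blow-up at this stage.
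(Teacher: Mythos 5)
Your proposal is correct and follows essentially the same route as the paper's own argument: bound the composition plant $\Plant$ as exponential in the number $n$ of behaviors but polynomial in their individual sizes, invoke the polynomial-time supremal-controllable-sublanguage synthesis of \cite{WonhamRamadge:SIAMJCO87} on $\Plant$ (whose marked language doubles as the specification), and observe that extracting $\CGDES$ from $\R$ is only a linear post-processing step. The paper states this in two sentences; you merely spell out the state-count $\card{T}\cdot\prod_{i}\card{B_i}\cdot(\card{\BAct_t}+1)\cdot(n+1)$ and the role of $\Plant$'s determinism explicitly, which adds useful detail without changing the argument.
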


The size of the plant $\Plant$ is indeed exponential in the number of behaviors, and the procedure to synthesize the supervisor (that is, to extract $\R$) is polynomial in the size of the plant and the generator for the specification~\cite{WonhamRamadge:SIAMJCO87,GohariWonham:TSMC00}.
It follows then that computing the DES controller generator can be done in exponential time in the number of behaviors, which is the best we can hope for~\cite{DeGiacomoS:IJCAI07}.

\subsection{Implementation in \TCT}\label{sec:TCT}
We close this section by noting that there are, in fact, several tools available for the automated synthesis of supervisors for a discrete event system. The reduction above allows us to use those tools off-the-shelf. 
In  particular, we have used \TCT~\cite{ZhangWonham:SSCDES01}, in which both the plant and the specification are formalized as generators, to extract the generator $\R$ encoding a supervisor for controlling the language $\TSpecc$.  

\newcommand{\Size}{\mathname{Size}}
\renewcommand{\Init}{\mathname{Init}}
\newcommand{\Mark}{\mathname{Mark}}
\newcommand{\Voc}{\mathname{Voc}}
\newcommand{\Tran}{\mathname{Tran}}

In \TCT, generators and recognizers are represented as standard DES in the form of a 5-tuple $\tup{\Size, \Init, \Mark, \Voc, \Tran}$: 
$\Size$ is the number of states (the standard state set is $\set{0,\ldots,Size-1}$); $\Init$ is the initial state (always taken to be $0$); $\Mark$ lists the marker states; $\Voc$ are the vocal states (not needed here); and $\Tran$ are the transitions.  
A transition is a triple $\tup{s,e,s'}$ representing a transition from
$s$ to $s'$ with label $e\in E$, where $E$ is the set of possible
event labels, encoded as integers.  
To distinguish between controllable and uncontrollable events, the
tool assumes that all controllable events are represented with even
integers, uncontrollable ones with odd integers.

For example, the generator $\G$ and the specification $\Spec_2$ from Example \ref{ex:generator2} are encoded in \texttt{ADS} format as follows:

{\footnotesize

\begin{verbatim}
G
State size: 4
Marker states: 0
Vocal states:

Transitions:
0 1 1
1 3 1
1 0 2
2 9 3
2 5 1
1 7 0
\end{verbatim}


\begin{verbatim}
K
State size: 1
Marker states: 0
Vocal states:

Transitions:
0 1 0
0 3 0
0 0 0
0 5 0
0 7 0

\end{verbatim}


\noindent
Events: 
$\texttt{0}$ = $\gBreak$, $\texttt{1}$ = $\gOn$, $\texttt{3}$ =
$\gOperate$,  $\texttt{5}$ = $\gRepair$, $\texttt{7}$ = $\gOff$, $\texttt{9}$ = $\gDismantle$\\

}

The encoding is self explanatory and amounts to declaring the plant generator \texttt{G} and the generator for the desired specification \texttt{K}.
A  transition line \texttt{0 1 1} in \texttt{G} encodes the transition $0\goto{\gOn}1$ in Figure \ref{ex:generator2}.
Observe how the generator for \texttt{K} has only one state, and any
event but $\gDismantle$ is represented by a loop on that state. Indeed, recall that this generator captures the specification $\Spec_2$, which excludes  $\gDismantle$ (event \texttt{9}). %

\noindent
The following steps are required for using the \TCT tool to compute the supervisor:

\begin{enumerate}\itemsep=1pt
\item Create the plant \texttt{G} and the specification \texttt{K} in \texttt{ADS} format for $\Plant$ and $\Specc$ (above);

\item Use the \texttt{FD} command to convert \texttt{DES} files in \texttt{ADS} format;
\item Use \texttt{supcon(G,K)} command to compute \texttt{R}, namely the generator representing the supremal controllable sublanguage of \texttt{K}, i.e., $\supremal{\Specc}$;
\item Use \texttt{supreduce(R)} to minimize the generator's size (this step is optional);
\item Compute \emph{control patterns} via \texttt{condat(G,R)}, where a control pattern is the set of (controllable) events that must be disabled in each state of the plant.
\end{enumerate}

\noindent
In this example, the generator $\texttt{R}$ is as the plant in Figure~\ref{ex:generator} but without state $3$. Accordingly, the \TCT command \texttt{condat(G,R)}  outputs \texttt{2:9}, which (only) disables $\gDismantle$ (event \texttt{9}) in state $2$ of $\G$.

\section{Supremal Realizable Target Fragment}\label{sec:DEScomposition-approx}

\newcommand{\targtrans}{\theta}
\newcommand{\action}{\ensuremath{\textsf{\small act}}} 

Suppose we are given a target behavior $\Targ$ and an available system $\S$ such that no exact composition for $\Targ$ in $\S$ is possible---the target \emph{cannot} be completely realized in the system. 
A mere ``no solution'' answer is unsatisfactory in most cases. 
The need to look beyond exact compositions was first recognized by \citea{StroederPagnucco:IJCAI09}, were they argue that one should look for ``approximations'' in
problem instances not admitting exact compositions.
Then,~\citea{YadavSardina:Jelia2012} provided the first attempt to define and study properties of such approximations.
Subsequently, these optimal approximations were refined and named \emph{supremal
  realizable target fragments} (SRTF) in~\cite{YadavFelliDeGiacomoSardina:IJCAI13}.
In this section, we show how to adapt the composition plant $\Plant$ to look for SRTFs (rather than exact composition) for the special case of deterministic available systems (i.e., one where all available behaviors are deterministic).

Roughly speaking, an SRTF is a ``fragment'' of the target behavior which accommodates an exact composition and is closest to the (original) target module.
It turns out that there is an exact solution for the original target iff there exists an SRTF that is simulation equivalent to it (a property that can be checked in polynomial time). More surprising is the fact that SRTFs are unique (up to simulation equivalence). 
Concretely,~\citea{YadavSardina:Jelia2012} first proposed to allow for \emph{nondeterministic} target behaviors but model user's requests as target transitions (instead of just actions). By doing that, full controllability of the target module is maintained while allowing approximating the original target as much as possible.
The definition of SRTFs, then, relies on the formal notion of simulation~\cite{Miln71}, already discussed in Section \ref{sec:preliminaries_bc}.
A target behavior $\tilde{\T} =
\tup{\tilde{T},\tilde{\BAct_t},\tilde{t_0},\tilde{\delta_t}}$ is a
\defterm{realizable target fragment} (RTF) of original target
specification $\T = \tup{T,\BAct_t,t_0,\delta_t}$ in available system
$\S$ iff 

\begin{itemize}
\item $\tilde{\T}$ is simulated by $\T$ (i.e., $\tilde{\T}   \preceq \T$); and 
\item $\tilde{\T}$ has an exact composition in $\S$.
\end{itemize}

Then, an RTF $\tilde{\T}$ is \defterm{supremal} (SRTF) iff there is no other RTF $\tilde{\T}'$ such that $\tilde{\T} \prec \tilde{\T}'$ (i.e., $\tilde{\T} \preceq \tilde{\T}'$ but $\tilde{\T} \not\preceq \tilde{\T}'$).
Intuitively, a supremal RTF is the closest alternative to the original target that can be completely realized. 
An alternate way of looking at SRTFs is to view them as the (infinite) union of all RTFs~\cite{YadavFelliDeGiacomoSardina:IJCAI13}.

\smallskip
The question we are interested in is as follows: \emph{is it possible to adapt the DES-based composition framework developed above to obtain SRTFs rather than exact compositions?} 

\smallskip
We answer this question positively for the case when available behaviors in $\S$ are deterministic.
The key idea to synthesizing SRTFs by controlling a DES plant is the fact that we are no longer committed to realize \emph{all} target traces: we only need to realize \emph{as many as possible}. 
As a consequence, one can see target actions no more as nondeterministic requests over which we have no control, but instead as actions one may decide to fulfill or not, possibly depending on context. 
Technically this means that events corresponding to user's requests are now assumed \emph{controllable}---the supervisor can enable certain requests and disable others.\footnote{Note that unlike what is often done in a standard DES applications, we do not want to control something in the real world that was not previously controllable, something that usually requires more capabilities (e.g., new actuators). The ``real world" remains the same (i.e., the available behaviors), and we now control what the user will be allowed to potentially request (i.e., the final target specification, the SRTF).}

So, we start by assuming that system $\S=\tup{\B_1,\ldots,\B_n}$ is deterministic and that, following~\cite{YadavSardina:Jelia2012}, target modules $\Targ$ may be, in general, nondeterministic: there may be two  transitions $\tup{t,a,t'}$, $\tup{t,a,t''} \in \delta_t$ such that $t'\neq t''$. To maintain controllability, though, user's requests amount to \emph{target transitions} of the form $\targtrans = \tup{t,a,t'}\in\delta_t$. Still, the task is to implement the action $a$ in the chosen transition $\theta$ via behavior delegation.

So, let us define an alternative DES plant $\PlantApprox$ suitable for synthesising supervisors encoding SRTFs.
Note that, since system $\S$ is deterministic, the whole process for one target request involves now only \emph{two} $\gamma$-transitions, both via controllable events, namely,  $(\targtrans\cdot j)\in (\delta_t \cdot \Services)$.
Note also that the original target is still assumed to be deterministic (the alternative supremal target may be nondeterministic).

\begin{definition}
Let $\S$ and $\T$ be a (deterministic) system and a target module, resp., as in Section~\ref{sec:DEScomposition}.
The \defterm{maximal composition plant} is defined as $\PlantApprox = \tup{\Act,G,g_0,\gamma,G_m}$, where:
\begin{itemize}
\item $\Act = \Act_c \cup \Act_u$ is the set of events of the plant,   where $\Act_u =\emptyset$ and $\Act_c=\Services \cup \delta_t$;

\item $G = T \times B_1 \times \ldots \times B_n \times (\delta_t \cup \set{e})$
  is the set of plant states;

\item $g_0 = \tup{t_0, b_{01},\ldots,b_{0n},e}$ is the initial state of $\PlantApprox$;

\item $\gamma : G \times \Act \ra G$ is the plant's transition function where  
$\gamma(\tup{t,b_1,\ldots,b_n,\req},\sigma)$ is equal to:
\begin{itemize}
\item $\tup{t,b_1,\ldots,b_n,\sigma}$ if $\req = e \text{ and } \sigma \in \delta_t$;
\item $\tup{t',b_1,\ldots,b_{\sigma}',\ldots,b_n,e}$ if $\req =
  \tup{t,a,t'} \in\delta_t, \sigma \in \Services$ and $b_{\sigma}' = \delta_{\sigma}(b_{\sigma},a)$.
\end{itemize}

\item $G_m = T \times B_1 \times \ldots \times B_n \times
  \set{e}$.
\end{itemize}
\end{definition}

\noindent
Notably, both target transition requests and behavior delegations are now controllable: the supervisor is allowed to forbid (i.e., disable) target requests.

As before, we just take $\Speccaprox = \Mkd(\PlantApprox)$ as the specification language to control (in the maximal composition plant).
To build a SRTF for target $\T$ in system $\S$, we first compute the language $\TSpeccaprox$, and then build its corresponding generator $\hat{\R}= \tup{ \Act, Y, y_0 , \rho, Y_m }$. 
Finally, we extract $\hat{\R}$ the alternative, possibly nondeterministic, target behavior $\MaxTarg_{\tup{\S,\T}} = \tup{T^*,\BAct_t,y_0,\delta_t^*}$, where:
\begin{itemize}
  \item $T^* = \set{ y \mid y \in Y, p\in(\delta_t \cdot \Services)^*, y =\rho(y_0,p) }$; and

    \item $\delta^*_t \subseteq T^* \times \BAct_t \times T^* $ is such that $y' \in \delta^*_t(y,a)$
    \emph{iff} 
    $y' = \rho(y,\targtrans \cdot j)$, where $j \in \Services$, $\targtrans \in \delta_t$, and $\targtrans = \tup{t,a,t'}$.
\end{itemize}

Next, we relate system histories to
words of plant $\PlantApprox$.
Differently from before, however, histories do not hold enough
information for this purpose, because the target is now nondeterministic. 
We then make use of the following definitions, to relate a target
trace $\tau$ \emph{and} an induced
system history $h\in\H_{\Comp,\tau}$ (for some $\Comp$) to a word in $\Gtd(\PlantApprox)$.
Given a target trace $\tau = t_0\goto{a_1}\cdots\goto{a_\ell} t_\ell$,
the word $\word(\tau,h)$ corresponding to an induced system history $h =
\vec{b}_0\goto{a_1,j_1}\cdots\goto{a_\ell,j_\ell}\vec{b}_\ell$
is 
\[	
\word(\tau,h) \defeq ( \tup{t_0,a_1,t_1} \cdot j_1) \cdot \; \ldots \; \cdot ( \tup{t_{\ell-1},a_\ell,t_\ell} \cdot j_\ell).
\]

We now present the key results for our technique.

\begin{lemma}
\label{lemma2}
$\Comp$ is a composition for an RTF $\tilde{\Targ}$ of $\Targ$ in $\S$
iff $\set{\word(\tau,h) ~|~ \tau\in\tilde{\Targ} \land h\in
\H_{\Comp,\tau}}\subseteq \TSpeccaprox$.
\end{lemma}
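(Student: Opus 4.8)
The plan is to mirror the proof of Lemma~\ref{lemma:final}, establishing a step‑by‑step correspondence between the induced system histories of $\Comp$ (relative to the traces of $\tilde{\Targ}$) and the words generated by $\PlantApprox$, and then to argue both directions by contradiction. The decisive structural simplification here is that $\Act_u=\emptyset$: every event of $\PlantApprox$ is controllable, so by Definition~\ref{def:controllable} controllability is trivially satisfied and the only operative restriction in passing from $\Mkd(\PlantApprox)$ to $\TSpeccaprox=\supremalC(\Mkd(\PlantApprox))$ is the nonblocking (coreachability) requirement realised by the supervisor guaranteed in Theorem~\ref{theo:supNSC}. Thus, unlike in Lemma~\ref{lemma:final}, there are no uncontrollable target/behaviour events to reason about, and the only way a word can fall outside $\TSpeccaprox$ is by failing to be a valid marked plant word reaching a state from which a marked state is still attainable.

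First I would set up the correspondence by induction on $\ell=\card{\tau}=\card{h}$, using the two cases of $\gamma$ for $\PlantApprox$: a controllable request $\targtrans=\tup{t_{i-1},a_i,t_i}\in\delta_t$ followed by a controllable delegation $j_i\in\Services$ with $b'_{j_i}=\delta_{j_i}(b_{j_i},a_i)$. I would show that $\word(\tau,h)$ drives $\PlantApprox$ from $g_0$ to the state $\tup{t_\ell,\stateproj_1(\vec{b}_\ell),\ldots,\stateproj_n(\vec{b}_\ell),e}$ exactly when every delegation $j_i$ in $h$ is legal in the (deterministic) system $\S$, in which case the reached state is marked. This is where the pairing $\word(\tau,h)$ (rather than $\word(h)$ alone) is essential: since $\Targ$ may now be nondeterministic, the request event $\targtrans$ cannot be recovered from $h$, so $\tau$ supplies the target transition read by the first case of $\gamma$ while $h$ supplies the delegations read by the second.

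For $(\Rightarrow)$ I would argue by contradiction as in Lemma~\ref{lemma:final}: if $\Comp$ composes $\tilde{\Targ}$ but some $\word(\tau,h)=\word(\tau',h')\concat\targtrans\concat j\notin\TSpeccaprox$ (with $\tau=\tau'\goto{a}t_\ell$ and $h=h'\goto{a,j}\vec{b}$), then since $h$ is an induced history its delegation $j$ is legal, so by the correspondence the word lies in $\Mkd(\PlantApprox)$ and reaches a marked state; moreover, because $\Comp$ realises every trace of $\tilde{\Targ}$ it can always extend, so that state is coreachable and the word belongs to $\TSpeccaprox$, a contradiction. For $(\Leftarrow)$ I would assume $\Comp$ is not a composition for $\tilde{\Targ}$, pick a trace $\tau=\tau'\goto{a}t_\ell$ of $\tilde{\Targ}$ and an induced history $h'$ at which the delegation $j=\Comp(h',a)$ is illegal, i.e.\ $\delta_j(\stateproj_j(\last(h')),a)$ is undefined. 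By the second case of $\gamma$ this forces $\word(\tau',h')\concat\targtrans\concat j\notin\Gtd(\PlantApprox)$, and since $\TSpeccaprox\subseteq\Gtd(\PlantApprox)$ it is not in $\TSpeccaprox$ either, contradicting the hypothesis, just as case~$(3)$ of Lemma~\ref{lemma:final} witnesses a failure of $\preceqnd$.

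The main obstacle I anticipate is the $(\Leftarrow)$ direction, and specifically making the witness of a composition failure land strictly outside $\TSpeccaprox$ rather than merely be absent from the induced set. Because $\Act_u=\emptyset$, I cannot lean on a forced uncontrollable continuation the way case~$(4)$ of Lemma~\ref{lemma:final} does; the failure has to be exhibited directly as a disabled delegation in the blocking plant $\PlantApprox$, using that $\tilde{\Targ}$ is a genuine RTF so that the offending request $\targtrans$ really does belong to a trace a composition is obliged to realise. I would therefore take care to associate with the failing step the word that already includes the controller's (illegal) delegation $j=\Comp(h',a)$, so that the undefined $\gamma$-transition places it outside $\Gtd(\PlantApprox)$, and then close the argument via the correspondence together with Corollary~\ref{corolllary:1}, equating non‑membership with a failure of $\stateproj_t(g')\preceqnd\tup{\stateproj_1(g'),\ldots,\stateproj_n(g')}$ at $g'=\gamma(g_0,\word(\tau',h'))$.
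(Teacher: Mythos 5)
Your proposal is correct and takes essentially the same route as the paper's proof: the same pairing of a trace of $\tilde{\Targ}$ with an induced history to obtain a word of $\PlantApprox$, the same exploitation of the two cases of $\gamma$ and of the fact that $\Act_u=\emptyset$, and the same contradiction arguments in both directions. Your direct argument in $(\Rightarrow)$ (the induced word is a legally generated marked plant word, hence in $\TSpeccaprox$) is just a compressed form of the paper's case analysis $(1)$--$(3)$, and your $(\Leftarrow)$ --- placing the word that includes the controller's illegal delegation outside $\Gtd(\PlantApprox)$ and hence outside $\TSpeccaprox$ --- is precisely the step the paper itself uses.
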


\begin{proof}
($\Rightarrow$) Assume by contradiction
that there exists a composition $\Comp$ for $\tilde{\Targ}$ in $\S$ such that for some target trace
$\tau = t_0\goto{a_1}t_1\goto{a_2}\cdots\goto{a_\ell} t_\ell$ of
$\tilde{\Targ}$ and
induced history $h\in\H_{\Comp,\tau}$, we have $P(h,\targtrans)=j$ but
$\word(\tau,h) \cdot \targtrans \cdot  j \not\in \TSpeccaprox$, where
$\targtrans = \tup{t_\ell,a,t_{\ell+1}}$ is the new target transition
been requested.
This implies that $\word(\tau,h) \cdot \targtrans \cdot j $ is not
allowed by  $\sup$, from the initial state
$g_0$.  
Similarly to Lemma~\ref{lemma:final}, either 
\begin{itemize}\itemsep=0pt
\item[$(1)$] $\word(\tau,h) \concat \targtrans \not\in
\Gtd(\PlantApprox)$ or 
\item[$(2)$] $\word(\tau,h) \concat \targtrans \concat j \not\in
\Gtd(\PlantApprox)$ or
\item[$(3)$] for all words $w\in\Mkd(\PlantApprox)$ with $w>\word(\tau,h) \concat \targtrans \concat
j$ 
 we have $w\not\in\Closure{\TSpeccaprox}$. 
\end{itemize}
Without loss of generality, assume $\targtrans=\tup{t_\ell,a,t_{\ell+1}}$. If $(1)$ is true, then from the definition of $\PlantApprox$
(transition function $\gamma$, first item) the transition $\targtrans$ is not in
$\delta_t$, thus $\tilde{\Targ}$ is not an RTF of $\Targ$, as
$\tilde{\Targ}\not\preceq\Targ$. Similarly,
case $(2)$ violates the assumption that $\Comp$ is a composition of
$\tilde{\Targ}$, as behaviour $j$ is not able to perform the action
$a$ from its current local state. Indeed, any legal delegation of
action $a$ to any behaviour is considered in $\PlantApprox$
(transition function $\gamma$, second item). Case $(3)$ can be
excluded by considering cases $(1)(2)$ by induction on $\ell$.

($\Leftarrow$) Assume by contradiction this is not the case. Then there exists a word
$\word(\tau,h)\in\TSpeccaprox$ for some trace $\tau$ of an RTF
$\tilde{\Targ}$ of $\Targ$ in $\S$ and history $h\in\H_{\Comp,\tau}$
such that $\Comp$ is not a composition of $\tilde{\Targ}$ in $\S$.
It means that for some $\theta=\tup{\stateproj_t(\last(h)),a,t'}$ in
$\tilde{\Targ}$ we have $\Comp(h,\theta)=j$
but behavior $j$ can not perform action $a$, namely there is no $b'_j$
such that $b'_j\in\delta_j(\stateproj_j(last(h)),a)$.
Hence, according to $\PlantApprox$ (and its transition function
$\gamma$), $\word(h,\tau)\concat\theta\concat
j\not\in\Gtd(\PlantApprox)$, and we get a contradiction. 
\end{proof}

\begin{theorem}[Soundness]
Let $\T$ be a target and $\S$ a deterministic system. Then, $\MaxTarg_{\tup{\S,\T}}$ is a SRTF of $\T$ in $\S$.
\end{theorem}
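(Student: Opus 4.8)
The plan is to verify separately the two clauses in the definition of a supremal RTF: that $\MaxTarg_{\tup{\S,\T}}$ is itself an RTF of $\T$ in $\S$ (namely $\MaxTarg_{\tup{\S,\T}} \preceq \T$ and it admits an exact composition), and that it simulates every other RTF, so that none can strictly dominate it. A preliminary observation I would record once and reuse throughout is that, since $\T$ is deterministic, any fragment $\tilde{\T} \preceq \T$ folds functionally into $\T$: the action sequence of a trace of $\tilde{\T}$ determines a unique run of $\T$, so every request $\tup{\tilde{t},a,\tilde{t}'}$ of $\tilde{\T}$ can be identified with a genuine plant event $\theta = \tup{t,a,\delta_t(t,a)} \in \delta_t$, and $\word(\tau,h)$ is then a well-defined word over $\delta_t \concat \Services$. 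This identification is exactly what lets Lemma~\ref{lemma2} be applied to $\tilde{\T}$ and to $\MaxTarg_{\tup{\S,\T}}$ alike.

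For the RTF clause, I would first exhibit the map $y \mapsto \stateproj_t(y)$ as a simulation of $\MaxTarg_{\tup{\S,\T}}$ by $\T$. By the construction of $\PlantApprox$, every move $y \goto{a} y'$ of $\MaxTarg_{\tup{\S,\T}}$ comes from $y' = \rho(y,\theta\concat j)$ with $\theta = \tup{t,a,t'} \in \delta_t$, and then $\stateproj_t(y)=t \goto{a} t'=\stateproj_t(y')$ in $\T$; hence $\MaxTarg_{\tup{\S,\T}} \preceq \T$. To obtain an exact composition, I would read a controller $\Comp$ directly off $\hat{\R}$: for a history $h$ leading to $\hat{\R}$-state $y = \rho(y_0,\word(\tau,h))$ and a request $\theta$ enabled at $y$ in $\MaxTarg_{\tup{\S,\T}}$, set $\Comp(h,\theta)$ to any $j$ with $\rho(y,\theta\concat j)$ defined; such $j$ exists precisely because the move belongs to $\delta_t^*$, and (as $\S$ is deterministic) it forces a unique successor that is again a state of $\MaxTarg_{\tup{\S,\T}}$. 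Every word so induced ends in a marked state, hence lies in $\Mkd(\hat{\R}) = \TSpeccaprox$, so having established $\MaxTarg_{\tup{\S,\T}} \preceq \T$ the $(\Leftarrow)$ direction of Lemma~\ref{lemma2} certifies $\Comp$ as an exact composition of $\MaxTarg_{\tup{\S,\T}}$.

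For supremality, I would take an arbitrary RTF $\tilde{\T}$ with composition $\Comp$ and build a simulation of $\tilde{\T}$ by $\MaxTarg_{\tup{\S,\T}}$. By the $(\Rightarrow)$ direction of Lemma~\ref{lemma2}, the word set $W = \set{\word(\tau,h) \mid \tau \in \tilde{\T},\ h \in \H_{\Comp,\tau}}$ is contained in $\TSpeccaprox = \Mkd(\hat{\R})$. I would then define $R = \set{(\tilde{t},y) \mid \tilde{t} \text{ is reached by some } \tau \in \tilde{\T} \text{ and } y = \rho(y_0,\word(\tau,h)) \text{ for some } h \in \H_{\Comp,\tau}}$, which is legitimate because $\hat{\R}$ is deterministic; in particular $(\tilde{t_0},y_0) \in R$ via the empty trace. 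To check $R$ is a simulation, take $(\tilde{t},y) \in R$ and a move $\tilde{t} \goto{a} \tilde{t}'$; its associated plant event is $\theta = \tup{\stateproj_t(y),a,\delta_t(\stateproj_t(y),a)}$, and since $\Comp$ realizes $\tilde{\T}$ the extended trace yields $\word(\tau,h) \concat \theta \concat j \in W \subseteq \TSpeccaprox$ for $j = \Comp(h,\theta)$. Therefore $\rho(y,\theta\concat j)$ is defined, so $y \goto{a} y'$ with $y' = \rho(y,\theta\concat j)$ is a move of $\MaxTarg_{\tup{\S,\T}}$ and $(\tilde{t}',y') \in R$. Hence $\tilde{\T} \preceq \MaxTarg_{\tup{\S,\T}}$ for every RTF $\tilde{\T}$, which rules out any RTF strictly above $\MaxTarg_{\tup{\S,\T}}$.

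The step I expect to be the real obstacle is the supremality argument, and specifically making the relation $R$ rigorous. It forces one to line up three distinct kinds of object — the (possibly nondeterministic) states of $\tilde{\T}$, their functional images in the deterministic $\T$ that appear inside $\word$, and the $\hat{\R}$-states reached by the corresponding words — and then to show that every move of $\tilde{\T}$ produces a \emph{defined} $\hat{\R}$-transition. This last implication is precisely where the maximality of $\TSpeccaprox$ as the supremal controllable (nonblocking) sublanguage does the work: it guarantees that the induced words of \emph{every} RTF already sit inside $\TSpeccaprox$, so that $\MaxTarg_{\tup{\S,\T}}$, being extracted from $\hat{\R}$, is large enough to match them all. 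Verifying $\MaxTarg_{\tup{\S,\T}} \preceq \T$ and the existence of an exact composition, by contrast, is largely a matter of unwinding the definition of $\PlantApprox$ and invoking Lemma~\ref{lemma2}.
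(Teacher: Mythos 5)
Your proof is correct and follows essentially the same route as the paper: both parts hinge on Lemma~\ref{lemma2} and on the fact that $\MaxTarg_{\tup{\S,\T}}$ is read off the deterministic generator $\hat{\R}$ for $\TSpeccaprox$. The differences are presentational: you spell out the RTF clause (which the paper leaves to the reader) and prove supremality directly by exhibiting a simulation of an arbitrary RTF by $\MaxTarg_{\tup{\S,\T}}$, whereas the paper argues by contradiction on a single simulation-breaking trace of the supposed supremal $\T^{\uparrow}$ --- the underlying appeal to Lemma~\ref{lemma2} is identical.
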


\begin{proof} 
By construction $\MaxTarg_{\tup{\S,\T}}$ is an RTF of $\T$ in $\S$.
The proof is left to the reader, as it is evident by the fact that $\Gtd(\hat{\R})=\Closure{\TSpeccaprox}\subseteq\Gtd(\PlantApprox)$, and transitions in $\PlantApprox$ are only defined wrt $\Targ$'s evolution (function $\delta_t$).

It remains to show that $\MaxTarg_{\tup{\S,\T}}$ is indeed the maximal
one. 
Suppose $\T^{\uparrow}$ is the SRTF of $\T$ in $\S$ and $\MaxTarg_{\tup{\S,\T}} \not \succeq \T^{\uparrow}$.
Therefore, there exists a trace $\tau = t_0 \goto{a_1}\cdots
\goto{a_\ell}  t_\ell\goto{a_{\ell+1}}t_{\ell+1}$ of $\T^{\uparrow}$ 
that is not in $\MaxTarg_{\tup{\S,\T}}$,
and the simulation ``breaks'' at $t_\ell$. Formally, for any trace $t^*_0\goto{a_1}\cdots\goto{a_\ell}t^*_\ell$
of $\MaxTarg_{\tup{\S,\T}}$ with $t^*_0=y_0$ and the same action sequence $a_1\cdots
a_\ell$ as $\tau$, there is no transition
$t^*_\ell\goto{a_{\ell+1}}t^*_{\ell+1}$: $\MaxTarg_{\tup{\S,\T}}$ is
unable to perform action $a_{\ell+1}$.
However, by Lemma~\ref{lemma2}, the word $\word(\tau,h)$ is in
$\TSpeccaprox$, 
and the trace $t^*_0
\goto{a_1}\cdots\goto{a_{\ell+1}}t^*_{\ell+1}$ has to be a trace of
$\MaxTarg_{\tup{\S,\T}}$. So we get a contradiction. Thus
$\T^{\uparrow}$ does not exist and $\MaxTarg_{\tup{\S,\T}} $ is an
SRTF of $\T$ in $\S$.
\end{proof}

Observe, the controller generator for $\MaxTarg_{\tup{\S,\T}}$ can be computed \emph{while} building the SRTF itself, by tracking delegations in $\hat{\R}$, similar to DES-based composition.
In fact, we have:

\begin{theorem}[Completeness]
Let $\sup$ be a nonblocking supervisor such that $\Mkd(\sup/\PlantApprox)=\TSpeccaprox$.
Then, every composition $\Comp$ for $\MaxTarg_{\tup{\S,\T}}$ in system $\S$ can be induced by $\sup$.
\end{theorem}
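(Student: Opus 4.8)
The plan is to transcribe the argument for the exact case (Theorem~\ref{theo:completness}) into the SRTF setting, replacing $\Plant$, $\TSpecc$ and the map $\word(h)$ by $\PlantApprox$, $\TSpeccaprox$ and $\word(\tau,h)$, and invoking Lemma~\ref{lemma2} wherever the exact proof invokes Lemma~\ref{lemma:final}. First I would fix the meaning of \emph{induced by} in this setting: a composition $\Comp$ for $\MaxTarg_{\tup{\S,\T}}$ is induced by $\sup$ iff $\Comp(h,\theta)\in\sup(\word(\tau,h)\concat\theta)$ for every trace $\tau$ of $\MaxTarg_{\tup{\S,\T}}$, every induced history $h\in\H_{\Comp,\tau}$, and every requested target transition $\theta$ of $\MaxTarg_{\tup{\S,\T}}$ enabled at $\last(h)$.

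Then I would argue by contradiction. Suppose some composition $\Comp'$ for $\MaxTarg_{\tup{\S,\T}}$ cannot be induced by $\sup$. Then there are a trace $\tau = t_0\goto{a_1}\cdots\goto{a_\ell}t_\ell$ of $\MaxTarg_{\tup{\S,\T}}$, an induced history $h\in\H_{\Comp',\tau}$, and a requested transition $\theta = \tup{t_\ell,a,t'}$ with $\Comp'(h,\theta)=j$ yet $j\notin\sup(\word(\tau,h)\concat\theta)$. The only structural difference from the exact case is decisive here: in $\PlantApprox$ both target transitions and delegations are \emph{controllable}. Hence $j\notin\sup(\word(\tau,h)\concat\theta)$ says exactly that the string $\word(\tau,h)\concat\theta\concat j$ is not retained by the controlled system, i.e.\ $\word(\tau,h)\concat\theta\concat j\notin\Gtd(\sup/\PlantApprox)=\Closure{\TSpeccaprox}$, and so in particular $\word(\tau,h)\concat\theta\concat j\notin\TSpeccaprox$.

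On the other hand, by the Soundness theorem for the deterministic case, $\MaxTarg_{\tup{\S,\T}}$ is an RTF of $\T$ in $\S$, and $\Comp'$ is a composition for it. Because $\theta$ is an enabled transition of $\MaxTarg_{\tup{\S,\T}}$ at $\last(h)$ and $\Comp'$ realizes it, the one-step extension $\tau' = \tau\goto{a}t'$ is again a trace of $\MaxTarg_{\tup{\S,\T}}$ and admits an induced history $h' = h\goto{a,j}\vec b'\in\H_{\Comp',\tau'}$. By the very definition of the map, $\word(\tau',h') = \word(\tau,h)\concat\theta\concat j$. The $(\Rightarrow)$ direction of Lemma~\ref{lemma2} then forces $\word(\tau',h')\in\TSpeccaprox$, contradicting the previous paragraph. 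Hence no such $\Comp'$ exists, and every composition for $\MaxTarg_{\tup{\S,\T}}$ is induced by $\sup$.

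I expect the one delicate point to be the bookkeeping of the word map: one must verify that $\word(\tau,h)\concat\theta\concat j$ is genuinely of the form $\word(\tau',h')$ for a legitimate trace/history pair of the SRTF---that is, that appending the requested transition $\theta$ and the delegation $j$ corresponds to appending exactly one block $\theta\concat j$---and that the existence of a $\theta$-successor at $\last(h)$ (which holds because $\Comp'$ realizes $\MaxTarg_{\tup{\S,\T}}$) is what licenses the appeal to Lemma~\ref{lemma2}. Since the original target is deterministic, the underlying original transition $\theta\in\delta_t$ matching each action of $\tau$ is uniquely determined, so $\word(\tau,h)$ is well defined and the decomposition above is unambiguous. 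Everything else is a direct transcription of the exact-case argument.
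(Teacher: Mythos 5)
Your proof is correct and follows essentially the same route as the paper's: argue by contradiction, observe that a non-induced delegation $j$ forces $\word(\tau,h)\concat\theta\concat j\notin\TSpeccaprox$, and contradict the $(\Rightarrow)$ direction of Lemma~\ref{lemma2} applied to the extended trace/history pair. You simply make explicit the steps the paper leaves implicit (the nonblocking identity $\Gtd(\sup/\PlantApprox)=\Closure{\TSpeccaprox}$, the construction of $(\tau',h')$, and the well-definedness of the word map via determinism of the original target), which is a faithful elaboration rather than a different argument.
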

\begin{proof}
Assume by contradiction that there exists a composition $\Comp'$ which can not be induced by $\sup$, i.e., it is such that $\Comp'(h,\theta) \not\in \sup(\word(\tau,h)\cdot\theta)$ for some target transition $\theta=\tup{t^*_{\ell-1},a_\ell,t^*_\ell}$ and target trace $\tau=t^*_0\goto{a_1}\cdots\goto{a_\ell} t^*_\ell$ in $\MaxTarg_{\tup{\S,\T}}$ and some induced system history
$h\in\H_{\Comp',\tau}$. 
Assume $\Comp'(h,\theta)=j$. It follows that  $\word(\tau,h)\cdot
\theta \cdot j \not\in \TSpeccaprox$, which contradicts Lemma~\ref{lemma2}.
\end{proof}

In words, every supervisor that can control language $\TSpeccaprox$ in
the maximal composition plant $\PlantApprox$ encodes all exact
compositions of the SRTF $\MaxTarg_{\tup{\S,\T}}$ built
above.

\begin{corollary}
\label{corolllary:1} 
Given a plant $\PlantApprox$ for $\S$ and $\T$ as above, if $\TSpeccaprox\neq\emptyset$ then for any word $w\in\TSpeccaprox$, we have that
$\stateproj_t(g)\preceqnd\tup{\stateproj_1(g),\ldots,\stateproj_n(g)}$
where $g=\gamma(g_0,w)$.
\end{corollary}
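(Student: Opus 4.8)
The plan is to exhibit an explicit \textsc{nd}-simulation relation and verify its two clauses, mirroring the inductive argument used for the exact-setting counterpart but now invoking Lemma~\ref{lemma2} in place of Lemma~\ref{lemma:final} and the Soundness result established above for $\MaxTarg_{\tup{\S,\T}}$ in place of the exact soundness theorem. Concretely, I would set $R=\set{(\stateproj_t(g),\tup{\stateproj_1(g),\ldots,\stateproj_n(g)})\mid g=\gamma(g_0,w),\ w\in\TSpeccaprox}$ and reduce the claim to showing that $R$ is an \textsc{nd}-simulation containing every such reachable pair. Membership of the initial pair is the base case: for $w=\epsilon$ we have $g=g_0$, so $\stateproj_t(g_0)=t_0$ and $\tup{\stateproj_1(g_0),\ldots,\stateproj_n(g_0)}=\vec b_0$, and since the Soundness result guarantees that $\MaxTarg_{\tup{\S,\T}}$ is an RTF of $\T$ in $\S$ it admits an exact composition, whence Theorem~\ref{th:sim_initial_states} gives $\stateproj_t(g_0)\preceqnd\vec b_0$.

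For the closure step I would take a pair $(\stateproj_t(g),\vec b)\in R$ with $g=\gamma(g_0,w)$, $w\in\TSpeccaprox$; each $w\in\TSpeccaprox$ ends in a marked state ($\req=e$), so any continuation is a one-request block $\theta\concat j\in(\delta_t\concat\Services)$. Because the supervisor $\sup$ realizing $\TSpeccaprox$ is nonblocking, for every target transition $\theta=\tup{\stateproj_t(g),a,t'}$ that $\TSpeccaprox$ retains from $g$ there is a delegation $j\in\Services$ with $w\concat\theta\concat j\in\Closure{\TSpeccaprox}$ that extends to a marked word; by the shape of $\gamma$ (second case) this requires exactly that the deterministic behavior $j$ legally performs $a$, giving a matching system move $\vec b\goto{a,j}\vec b'$ with $\stateproj_j(\vec b')=\delta_j(\stateproj_j(g),a)$, which is clause~\myi. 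Writing $g'=\gamma(g_0,w\concat\theta\concat j)$, the successor pair $(\stateproj_t(g'),\vec b')$ is again of the form defining $R$, so it lies in $R$ and clause~\myii holds; the correspondence of Lemma~\ref{lemma2} between plant words and $(\tau,h)$ pairs is what licenses reading these delegations back as composition choices.

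The step requiring real care—and where this is genuinely \emph{not} a transcription of the exact-case proof—is the treatment of controllability. In the exact plant target actions are uncontrollable, so controllability of $\TSpecc$ (Definition~\ref{def:controllable}) forces \emph{every} target action to be matched and the simulation holds against all of $\T$. In $\PlantApprox$ we have $\Act_u=\emptyset$ and target transitions are controllable, so a nonblocking supervisor may legitimately disable a request that no behavior can serve; the uncontrollability argument is therefore unavailable, and the simulation at $g$ can only be witnessed through the transitions that $\TSpeccaprox$ actually retains from $\stateproj_t(g)$. I would make this precise by proving the equivalence ``$\theta=\tup{\stateproj_t(g),a,t'}$ is retained from $g$ in $\TSpeccaprox$ iff some $j\in\Services$ legally matches $a$'' (i.e.\ iff clause~\myi is satisfiable), so that the fragment of $\delta_t$ surviving at $\stateproj_t(g)$ coincides exactly with the moves $\vec b$ can simulate, and it is against this retained fragment that $\stateproj_t(g)\preceqnd\tup{\stateproj_1(g),\ldots,\stateproj_n(g)}$ is to be read. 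This restriction is indispensable rather than cosmetic: read against the full $\delta_t$, the conclusion can already fail at $g_0$ whenever $\T$ offers from $t_0$ an action that no available behavior can execute, so the supervisor's disabling of such requests is precisely what keeps every reachable $\stateproj_t(g)$ inside an \textsc{nd}-simulation pair.
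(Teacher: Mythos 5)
Your proof is correct, and it takes a genuinely different---and more careful---route than the paper, whose entire argument for this corollary is the single sentence ``The proof proceeds as in the basic case.'' That deferral is problematic for precisely the reason you identify: the basic case's inductive step concludes $w\concat a\in\Closure{\TSpecc}$ for \emph{every} legal request $a$ from $w\in\TSpecc$ because $a\in\Sigma_u$ and $\TSpecc$ is controllable (Definition~\ref{def:controllable}), and this uncontrollability is what forces all of $\T$'s moves to be matched. In $\PlantApprox$ we have $\Act_u=\emptyset$, so the controllability condition of Definition~\ref{def:controllable} is vacuous (every sublanguage is controllable, and $\TSpeccaprox$ reduces to the coreachable part of $\Mkd(\PlantApprox)$), and a supervisor may simply disable a request no behavior can serve rather than being forced to match it. Your two repairs---substituting nonblockingness plus the shape of $\gamma$ for the uncontrollability argument in the closure step, and reading $\preceqnd$ at $\stateproj_t(g)$ against the fragment of $\delta_t$ retained in $\TSpeccaprox$ (equivalently, against $\MaxTarg_{\tup{\S,\T}}$, whose generator states can here be identified with plant states)---are exactly what the paper's one-liner silently presupposes, and your observation that the statement read against the full $\delta_t$ already fails at $g_0$ whenever $\T$ offers an unservable action is a correct diagnosis of why the corollary only makes sense under the retained-fragment reading.

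Two minor notes. In the base case, Theorem~\ref{th:sim_initial_states} applied to the SRTF yields $y_0\preceqnd\vec{b}_0$ for $\MaxTarg_{\tup{\S,\T}}$'s initial state, which is what your ``$t_0$ with retained transitions'' claim amounts to; it is worth saying so explicitly, since $t_0\preceqnd\vec{b}_0$ in the literal sense may fail. Second, because SRTF requests are transitions $\theta\in\delta_t$ rather than actions, your per-transition choice of the delegation $j$ is the right quantification here, whereas the paper's nd-simulation definition asks for a single $j$ per action over a deterministic target; with $\S$ deterministic, clause \myii is then immediate since each delegation has a unique successor. Neither point is a gap---both are consistent with the reinterpretation your proof makes explicit.
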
 

The proof proceeds as in the basic case.

\begin{example}
Figure~\ref{fig:ambientIntel-system-sup} depicts the SRTF for a deterministic variant of the problem instance of Figure~\ref{fig:ambientIntel-system}, in which the excavator can not load the truck twice without being repaired.
Observe that $\MaxTarg_{\tup{\S,\T}}$ and $\Targ$ are simulation equivalent, hence we know that an exact composition exists for the original specification $\Spec$. However, since actions $\aload$ and $\arepair$ are now nondeterministic  in $\MaxTarg_{\tup{\S,\T}}$ (from $t_1$ and $t_7$, respectively), one could, in principle, chose to delegate a nondeterministic action to different behaviors based on the transition requested (cf. Figure~\ref{ex:CG}). 
Hence, the $\MaxTarg_{\tup{\S,\T}}$  is now more informative than
$\Targ$, in the sense that it allows for more solutions if the user
commits to subsequent choices (here, whether to dig next or not). The
so-obtained representation of the SRTF is not minimal, but can be
easily minimised (e.g., by using the {\small\texttt{supreduce}}
command in TCT --- see Section \ref{sec:TCT}.)
\end{example}
 
\begin{figure}[!t]
\begin{center}
\resizebox{\columnwidth}{!}{\begin{tikzpicture}[->,thin,node distance=1.8cm,double distance=2pt]
\tikzstyle{every state}=[circle,fill=none,draw=black,text=black,inner sep=1pt,minimum size=4mm,font=\small]

\tikzset{ActionStyle/.style = {font=\footnotesize}}

\begin{scope}[shift={(0cm,-.1cm)},node distance=2cm,auto]
\node[initial above,state]	(a0)                    {$a_0$};
\node[state]    			(a1) [right of=a0] 		{$a_1$};
\node[state]                (a2) [below left of=a1,yshift=.05in]      {$a_2$};

\path	
(a0) edge[loop left]  node[sloped,below,xshift=-.7cm,ActionStyle] {$\arepair$}    (a0)
 (a0) edge[bend left]  node[swap,ActionStyle,yshift=-.1cm] {$\amine$}    (a1)
(a1) edge[bend left=20]  node[pos=.4,ActionStyle,xshift=-.1cm] {$\adepo$}    (a2)
(a2) edge[bend left]  node[swap,pos=.4,ActionStyle] {$\aunload$} (a0)
;

\node (name)[above of=a0,shift={(0cm,-1.2cm)}]	
	{{\footnotesize \labelfig{Truck $\B_{1}$}}};
\end{scope}

\begin{scope}[shift={(7.5cm,.7cm)},node distance=2cm,auto]
\node[initial right,state]	(c0)                    {$c_0$};
\node[state]    			(c1) [below of=c0,shift={(0cm,.6cm)}] 		{$c_1$};

\path	
(c0) edge[loop left]  node[below,pos=.3,ActionStyle] {$\adrill$}    (c0)
(c0) edge[bend left]  node[sloped,above,ActionStyle] {$\aload$}    (c1)
(c1) edge[bend left]  node[pos=0.4,ActionStyle] {$\arepair$}    (c0)
;

\node (name)[below of=c1,shift={(-.3cm,1.4cm)}]	
	{{\footnotesize \labelfig{Excavator $\B_{3}$}}};
\end{scope}

\begin{scope}[shift={(5.5cm,0cm)},node distance=2cm,auto]
\node[initial right,state]	(b0)                    {$b_0$};
\node[state]    			(b1) [left of=b0,xshift=-0.3cm] 		{$b_1$};
\node[state]                (b2) [below right of=b1,yshift=.15in,xshift=-.2cm]      {$b_2$};

\path
(b0) edge[loop above]  node[ActionStyle] {$\aload$}    (b0)	
(b0) edge[bend right]  node[swap,ActionStyle] {$\adepo$}    (b1)
(b1) edge[]  node[swap,ActionStyle] {$\aunload$}    (b0)
(b1) edge[bend right]  node[swap,pos=.9,ActionStyle] {$\arepair$}    (b2)
(b2) edge[sloped]  node[below,ActionStyle] {$\amine$}    (b0)
;

\node (name)[above of=b1,shift={(-.6cm,-1.2cm)}]	
	{{\footnotesize \labelfig{Loader $\B_{2}$}}};
\end{scope}

\begin{scope}[shift={(0cm,-2.5cm)},node distance=1.7cm,auto]

\node[initial below,state]	(t0)   {$t_0$};
\node[state]	(t1)   [right of=t0,xshift=.5cm]   {$t_1$};
\node[state]	(t2)   [right of=t1]   {$t_2$};
\node[state]	(t3)   [right of=t2,xshift=.5cm]   {$t_3$};
\node[state]	(t4)   [right of=t3,xshift=.1cm]   {$t_4$};
\node[state]	(t5)   [below right of=t0,shift={(.8cm,.2cm)}]   {$t_5$};
\node[state]	(t6)   [right of=t5,shift={(.3cm,-.2cm)}]   {$t_6$};
\node[state]	(t7)   [right of=t6,xshift=.1cm]   {$t_7$};
\node[state]	(t8)   [right of=t7,xshift=-.1cm]   {$t_8$};

\path
(t0) edge[loop left]  node[below,pos=.2,ActionStyle] {$\adrill$}    (t0)	
(t0) edge[]  node[swap,pos=.5,ActionStyle] {$\amine$}    (t1)	
(t1) edge[]  node[ActionStyle] {$\aload$}    (t2)	
(t2) edge[]  node[ActionStyle] {$\adepo$}    (t3)	
(t3) edge[]  node[swap,ActionStyle] {$\aunload$}    (t4)	
(t4) edge[bend right=20]  node[ActionStyle] {$\arepair$}    (t0)	
(t1) edge[]  node[pos=.3,ActionStyle] {$\aload$}    (t5)	
(t5) edge[]  node[sloped,below,ActionStyle,swap] {$\adepo$}    (t6)	
(t6) edge[]  node[ActionStyle] {$\aunload$}    (t7)	
(t7) edge[]  node[ActionStyle,swap] {$\arepair$}    (t8)	
(t8) edge[]  node[sloped,above,pos=.3,ActionStyle] {$\amine$}    (t1)
(t7) edge[bend left=34]  node[sloped,below,pos=.85,ActionStyle] {$\arepair$}    (t0)	
;

\node (name)[above of=t0,shift={(0cm,-1cm)}]	
	{{\footnotesize \labelfig{$\MaxTarg_{\tup{\S,\T}}$}}};
\end{scope}

\end{tikzpicture}}
\end{center}
\vspace*{-.3cm}
\caption{SRTF for a deterministic variant ($\T$ as in Figure~\ref{fig:ambientIntel-system}).}
\label{fig:ambientIntel-system-sup}
\end{figure}
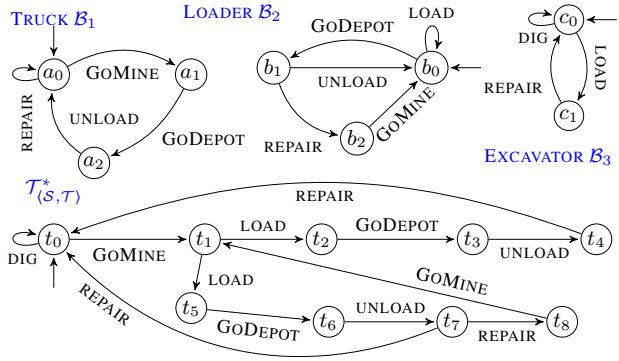

\section{Conclusions} \label{sec:conclusions}
\newcommand{\DES}{DES\xspace}

From an Computer Science perspective, planning, SCT, and behavior composition are all synthesis problems: build a plan, supervisor, or controller, respectively.
Observe that, at the core, these problems are concerned with
qualitative temporal decision making in dynamic domains and exhibit
strong resemblances in how their problem components are modeled (e.g.,
using transition system-like models) and the solution techniques used 
(e.g., model checking, search, etc.).
In fact, exploration of the relationship between these three synthesis tasks has already gained attention~\cite{Balbianietal:SERVICES09,Bertoli2010:PlanningComp,Barbeau1995:DESPlanning}.

The behavior composition problem can be considered as a planning problem for a maintenance goal, namely, to \emph{always} satisfy the target's request.
There, a plan (i.e., a controller) prescribes behavior delegations rather than domain actions~\cite{RamirezYadavSardina:ICAPS13}.
In particular, various forms of composition problems have been considered under various planning frameworks, including planning as model checking~\cite{Pistore2004:PlanningComp}, planning in asynchronous domains~\cite{Bertoli2010:PlanningComp}, and nondeterministic planning~\cite{RamirezYadavSardina:ICAPS13}.   
 
With respect to SCT, planning techniques have been used for both synthesis of supervisors~\cite{Barbeau1995:DESPlanning,Barbeau1997:DESBacktracking} as well as for diagnosis problems~\cite{Grastien2007:DESdiagnosis}. 
However, the link between composition and SCT still remains unexplored. To our knowledge, the only available literature deals with showing the decidability of mediator synthesis for web-service composition by reduction to DES~\cite{Balbianietal:SERVICES09}. 
Such work considers a composition setting involving web services able to exchange messages, and the task is to synthesise a \emph{mediator} able to communicate with them to realize the target specification, instead of an orchestrator (i.e., a controller) that schedules \nolinebreak them.

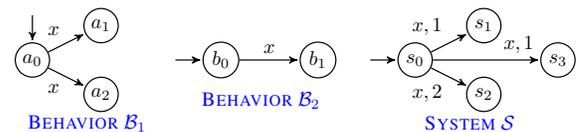
\begin{figure}[!b] 
\centering
\resizebox{.9\columnwidth}{!}{ 
\begin{tikzpicture}
 \tikzset{ActionStyle/.style = {font=\normalsize}}
\tikzstyle{every state}=[circle,fill=none,draw=black,text=black,inner sep=2pt,minimum size=6mm,font=\normalsize]

\begin{scope}[shift={(0cm,0cm)},node distance=1.7cm,auto]
\node[initial above,state]	(b0)                    		{$a_{0}$};
\node[state]    			(b1) [above right of=b0,yshift=-.6cm] 		{$a_{1}$};
\node[state]                (b2) [below right of=b0,yshift=.6cm]      	{$a_{2}$};

\path 	(b0) edge[]             	node[ActionStyle] 					{$x$}    (b1)
        (b0) edge[]             	node[ActionStyle,swap]				{$x$}    (b2)
        ;

\node (name)[below of=b2,shift={(-.25cm,1.2cm)}]	
	{{\normalsize \labelfig{Behavior $\B_1$}}};
\end{scope}

\begin{scope}[shift={(3.3cm,0cm)},node distance=1.7cm,auto]
\node[initial left,state]	(b0)                    		{$b_{0}$};
\node[state]    			(b1) [right of=b0] 				{$b_{1}$};

\path 	(b0) edge[]             	node[ActionStyle] 					{$x$}    (b1)
        ;

\node (name)[below of=b1,shift={(-1cm,1cm)}]	
	{{\normalsize \labelfig{Behavior $\B_2$}}};
\end{scope}

\begin{scope}[shift={(6.7cm,0cm)},node distance=1.7cm,auto]
\node[initial left,state]	(b0)                    		{$s_{0}$};
\node[state]    			(b1) [above right of=b0,yshift=-.6cm] 		{$s_{1}$};
\node[state]                (b2) [below right of=b0,yshift=.6cm]      	{$s_{2}$};
\node[state]                (b3) [right of=b0,xshift=.8cm]		      	{$s_{3}$};

\path 	(b0) edge[]             	node[ActionStyle] 					{$x,1$}    (b1)
        (b0) edge[]             	node[ActionStyle,swap]				{$x,2$}    (b2)
        (b0) edge[]             	node[ActionStyle,pos=.8]				{$x,1$}    (b3)
        ;

\node (name)[below of=b2,shift={(-.25cm,1.2cm)}]	
	{{\normalsize \labelfig{System $\S$}}};
\end{scope}
\end{tikzpicture}
}
\caption{Comparing DES and Behavior Composition.}
\label{fig:des-example}
\end{figure}

From the outset, it may seem that behavior composition and SCT are tackling the same problem, though maybe from different perspectives: SCT from an Engineering perspective and composition from a Computer Science one. 
Nonetheless, the inherent control problem in SCT and behavior composition are different in nature. 
In the latter, one seeks to control the available behaviors, whereas in the former one can prevent (some of the) actions. 
Consider the simple example shown in Figure~\ref{fig:des-example} with a nondeterministic behavior $\B_1$ and a deterministic behavior $\B_2$.
See that both behaviors share the action $x$; hence, in the enacted system $\S$, $x$ will be nondeterministic for $\B_1$ but not for $\B_2$ (as shown by the indexes used in $\S$).
The input in SCT is the whole plant, and it does not have a notion
analogous to available behaviors. Therefore,  component-based
nondeterminism cannot be captured (directly) in a plant, and one has
to make delegation events (i.e., indexes) explicit in the plant, as we showed in this paper.
Another important mismatch has to do with the semantics of nondeterminism: 
the nondeterminism of controllable actions in a plant is \emph{angelic}, in the sense that the supervisor can control its evolution. On the other hand, nondeterminism of available behaviors is \emph{devilish}, as it cannot be controlled. 
This is one of the reasons why, as far as we know, \DES frameworks do not have a notion similar to \textsc{nd}-simulation~\cite{DeGiacomoPatriziSardina:AIJ13}.
Indeed, uncertainty is modelled here via (deterministic) uncontrollable events~\cite{WonhamRamadge:SIAMJCO87}, whereas nondeterminism~\cite{DeGiacomoPatriziSardina:AIJ13} is used to model uncertainty (and partial controllability) in behavior composition.
Lastly, the term ``composition'' itself differs considerably: in SCT it refers to synchronous product between sub-systems, instead of an asynchronous realization in behavior composition literature.

Notwithstanding all the above differences, this paper shows that a link can indeed be drawn. In particular, we have demonstrated that solving an AI behavior composition problem can be seen as finding a supervisor for a certain plant. In doing so, one can expect to leverage on the solid foundations and extensive work in SCT, as well as on the tools available in those communities.
We have shown, for instance, that the DES-based encoding can accommodate (meta-level) constraint on the composition in a straightforward manner. In addition, we detailed how to slightly adapt the encoding to look for ``the best possible'' target realization when a perfect one does not exist, though only for the case of deterministic systems.
For practical applicability, experimental work should follow the work presented here  to check whether existing tools in SCT provide any advantages over existing composition techniques via game solvers~\cite{DeGiacomoPatrizi:WSFM09,DeGiacomoFelli:AAMAS10} or automated planners~\cite{RamirezYadavSardina:ICAPS13}.

Once the formal relationship between the two different synthesis tasks has been established, many possibilities for future work open up.
In fact, we would like to import notions and techniques common in SCT into the composition setting, such as hierarchical and tolerance supervision/composition~\cite{Cassandras:BOOK06-DES}. 
An interesting aspect to look at is how to use the marked language of specification $\Spec$ in order to encode \emph{constraints}. 
For example, one may want to impose that certain complex (high-level) tasks or processes built from domain action executions may not be started if their termination cannot be guaranteed. 
So, some goods in a factory production chain should not  be
cleaned unless it is guaranteed that they will be packaged and disposed afterwards. So far, we have only used the marked language (of the plant) to force complete termination of each action-request and delegation step.
On the other direction, probably the most interesting aspect to explore is the use of automated planning systems and game solvers to solve DES problems.

\footnotesize
\bibliography{bibstrings,paolo,behcomposition,desbiblio}
\bibliographystyle{plainnat}

\end{document}